\documentclass{article}
\usepackage[utf8]{inputenc}

\newcommand{\papertitle}{Measuring Forgetting of Memorized Training Examples}
\title{\bfseries\Huge\papertitle}

\usepackage[auth-lg,affil-sl]{authblk}

\author[1]{Matthew Jagielski}
\author[1]{Om Thakkar}
\author[1]{Florian Tram{\`e}r}
\author[1, 2]{Daphne Ippolito}
\author[1, 3]{\\Katherine Lee}
\author[1]{Nicholas Carlini}
\author[4]{Eric Wallace}
\author[1]{Shuang Song}
\author[1]{\\Abhradeep Thakurta}
\author[1]{Nicolas Papernot}
\author[1]{Chiyuan Zhang}

\affil[1]{Google}
\affil[2]{University of Pennsylvania}
\affil[3]{Cornell University}
\affil[4]{University of California, Berkeley}
\affil[5]{University of Toronto}

\date{\vspace{-5mm}}

\usepackage{amsmath,amssymb,amsfonts,amsthm}
\usepackage[margin=1.0in]{geometry}
\usepackage{graphicx}
\usepackage{times}
\usepackage{caption}
\usepackage{subcaption}
\usepackage{wrapfig}
\usepackage{url}
\usepackage[dvipsnames]{xcolor}
\usepackage{hyperref}

\usepackage{xspace}
\newcommand{\forgetting}{forgetting\xspace}

\definecolor{niccolor}{RGB}{34, 150, 21}

\definecolor{deicolor}{RGB}{165, 3, 252}

\theoremstyle{definition}

\newtheorem{definition}{Definition}[section]
\newtheorem{theorem}{Theorem}

\makeatletter
\newtheorem*{rep@theorem}{\rep@title}
\newcommand{\newreptheorem}[2]{%
\newenvironment{rep#1}[1]{%
 \def\rep@title{#2 \ref{##1}}%
 \begin{rep@theorem}}%
 {\end{rep@theorem}}}
\makeatother
\newreptheorem{theorem}{Theorem}

\usepackage[ruled,vlined]{algorithm2e}
	\SetKwFor{While}{While}{}{}
	\SetKwFor{For}{For}{}{}
	\SetKwIF{If}{ElseIf}{Else}{If}{}{Else If}{Else}{}
	\SetKw{Return}{Return}
	
\usepackage{algorithmicx}
\usepackage[english]{babel}
\usepackage{microtype}
\usepackage{verbatim}
\usepackage{xcolor}

\begin{document}

\maketitle
\begin{abstract}
Machine learning models exhibit two seemingly contradictory phenomena: training data memorization, and various forms of forgetting. In memorization, models overfit specific training examples and become susceptible to privacy attacks. In forgetting, examples which appeared early in training are forgotten by the end. In this work, we connect these phenomena.
We propose a technique to measure to what extent models ``forget'' the specifics of training examples, becoming less susceptible to privacy attacks on examples they have not seen recently.
We show that, while non-convex models can memorize data \emph{forever} in the worst-case, standard image, speech, and language models empirically do forget examples over time.
We identify nondeterminism as a potential explanation, showing that deterministically trained models do not forget.
Our results suggest that examples seen early when training with extremely large datasets---for instance those examples used to pre-train a model---may observe privacy benefits at the expense of examples seen later.

\end{abstract}

\section{Introduction}
Machine learning models are capable of memorizing information contained in their training data~\cite{shokri2017membership, carlini2019secret}. This is one of the reasons why models are vulnerable to privacy attacks such as membership inference~\cite{homer2008resolving} and training data extraction~\cite{carlini2019secret}. Resulting privacy concerns have led to a variety of techniques for private machine learning, including differentially private training~\cite{abadi2016deep,papernot2016semi,papernot2018scalable,ghazi2021deep,malek2021antipodes}, machine unlearning~\cite{bourtoule2021machine,neel2021descent,sekhari2021remember}, and various heuristics like regularization~\cite{nasr2018machine}, data augmentation~\cite{ATNMS22} or gradient clipping~\cite{carlini2019secret, TRMB20, HCTM22}.
These techniques all make modifications to the learning procedure so as to \textit{actively} limit privacy leakage, including leakage that results from memorization.
Instead, we observe that the training dynamics inherent to learning algorithms such as stochastic gradient descent may \textit{passively} afford some forms of privacy. 
Such dynamics include 
\forgetting: during iterative training, as models see new training examples, they
could lose track of the specifics of earlier examples---as prominently seen by research on catastrophic forgetting~\cite{french1999catastrophic,mccloskey1989catastrophic,kemker2018measuring}.

In this paper, we study to what extent the \forgetting exhibited by machine learning models has an impact on memorization and privacy. 
Our work is focused at distinguishing between two overarching hypotheses for how privacy interacts with forgetting.
The first is that memorization is a stronger effect than forgetting: traces of early training examples will still be detectable long after the examples are seen in training, perhaps due to their strong influence on the initial decisions made during the optimization process. The second hypothesis is that forgetting is stronger: early examples will be forgotten due to the many subsequent updates to the model weights as training progresses. 

Studying the impact of forgetting on privacy is most relevant when there is a large variation in how frequently an example may be seen during training. 
Indeed, models are increasingly 
trained on extremely large training sets, so that training consists of only a few epochs (or even a single one).
Such settings are used when training large image models~\cite{dai2021coatnet, mahajan2018exploring}, multimodal models~\cite{radford2021learning} and language models~\cite{komatsuzaki2019one, palm, hoffmann2022training, zhang2022opt}, the latter of which have come under significant scrutiny due to privacy concerns~\cite{carlini2021extracting, bender2021dangers}. Similarly, when a model is being fine tuned, the data that was originally used to pretrain the model is no longer seen in the second stage of training. Fine tuning is also an ubiquitous technique in many domains, especially in language~\cite{devlin2018bert}, speech~\cite{w2v}, and vision~\cite{radford2021learning, kornblith2019better} tasks. 

We design a methodology for measuring whether and how quickly individual training examples are forgotten. Our methodology relies on state-of-the-art membership inference attacks, the best known method for testing whether a given point was used in training. We use our methodology to show that, for deep neural networks trained on either vision or speech tasks, those examples that are used early in training (and not repeated later on) are indeed forgotten by the model. We identify a number of factors which impact the speed at which forgetting happens, such as when examples appear in training, or whether they are duplicated. We then attempt to understand why forgetting happens by showcasing two settings where forgetting does \emph{not} happen in the worst-case: non-convex models, such as $k$-means, and deterministic training algorithms. Our result on $k$-means is the first instance of privacy leakage due to non-convexity we are aware of. However, on a mean estimation task, we prove that forgetting does happen as a result of the stochasticity of gradient descent, with similar properties to our empirical results.
By using our approach to measuring forgetting, we hope experts training models on large datasets or fine tuning models can determine whether (and how much) forgetting has empirical privacy benefits for their training pipelines. We stress that our approach is complimentary to frameworks that offer worst-case guarantees, like differential privacy, and that it should not be used in lieu of reasoning about privacy within such frameworks.

\section{Background and Related Work}

\subsection{Defining Privacy in Machine Learning}

There are multiple valid privacy guarantees that have been considered for ML algorithms. First, \emph{differential privacy} ensures that the distribution of the output of the algorithm does not significantly change when a single example is changed.
In the context of ML, differential privacy can be obtained through modifying either the training algorithm~\cite{chaudhuri2011differentially,abadi2016deep} or the inference algorithm~\cite{papernot2016semi}. 
Differential privacy provably bounds the success of privacy attacks which leak information about individual training examples (see Section~\ref{ssec:background-attacks}).

More recently, other motivations for privacy have gathered interest. For instance, in \emph{machine unlearning}~\cite{cao2015towards,ginart2019making}, a user may issue a ``deletion request", after which their individual contribution to the model must be erased. This is different from differential privacy, which requires that the model not learn too much about \textit{any} of its training examples. Algorithms for machine unlearning have been proposed for $k$-means~\cite{ginart2019making}, empirical risk minimization~\cite{guo2019certified,izzo2021approximate,neel2021descent,ullah2021machine,sekhari2021remember}, and deep learning~\cite{du2019lifelong,golatkar2020eternal,golatkar2021mixed,nguyen2020variational,bourtoule2021machine}. If a point is perfectly unlearned, privacy attacks that target individual examples cannot succeed on this point.

Both of these definitions of privacy---differential privacy and unlearning---obtain privacy actively: they require that the training algorithm be modified to obtain privacy. Instead, we propose to capture privacy that is gained passively from dynamics that are inherent to training.
We define and measure \textit{forgetting} as a form of privacy that arises from the decay in how easy it is to extract information about an individual training point 
over the course of training.

Our definition of forgetting is inspired by the widely observed phenomenon of
\emph{catastrophic forgetting}~\cite{french1999catastrophic,mccloskey1989catastrophic,kirkpatrick2017overcoming,kemker2018measuring,davidson2020sequential,kaushik2021understanding},
where a model tends to forget previously learned knowledge when training on new data. More specifically, catastrophic forgetting is generally formulated in the \emph{continual learning} setting where the model sequentially learns a number of different tasks, and the performance on previously learned tasks drops significantly as the model learns a new task. In contrast, our work considers a model trained to solve a single fixed task and measures how it forgets some of its training examples seen earlier in training. 
Our work is also inspired by 
Feldman et al.~\cite{feldman2018privacy}, who show theoretically that iterative, differentially-private algorithms can exhibit forgetting. This means that the provable bounds on privacy leakage are better for samples seen earlier than those seen later.
Since provable privacy bounds are often loose~\cite{jagielski2020auditing, nasr2021adversary}, it remains to be seen whether whether these examples also exhibit empirically better privacy. We investigate this question here.

\subsection{Attacks against Privacy in Machine Learning}
\label{ssec:background-attacks}

To do this, we use attacks that target the privacy of training examples. Because differential privacy is established as the canonical framework to reason about privacy, these attacks were introduced in the context of differentially private ML. However, what is important to our work is that they target individual training examples---rather than differential privacy. We consider two types of privacy attack: \textbf{membership inference} and \textbf{training data extraction}.  While other attacks exist, such as attribute inference~\cite{fredrikson2014privacy} or property inference~\cite{ateniese2013hacking}, these attacks tend to capture properties of many examples or even the entire training set, whereas membership inference and training data extraction are designed to measure the privacy of few or a single example. 

In both membership inference and training data extraction, the adversary uses access to the model, possibly throughout training, to extract sensitive information contained in individual examples from the training set. Both of these privacy attacks have been shown to perform better when models are updated repeatedly, and when these repeated updates are all released to the adversary~\cite{salem2020updates, zanella2020analyzing, jagielski2022combine}. In our work, we only release the model at the end of all training, so repeated updates do not leak information.

\paragraph{Membership Inference}
In membership inference~\cite{homer2008resolving, dwork2015robust, yeom2018privacy, shokri2017membership}, an adversary infers whether or not a target example was contained in a model's training set.
Most techniques for membership inference predict if an example is in the training dataset by thresholding the loss on the query example: if the loss on an example is low, the example is likely training data, and if the loss is high, the example is likely not in the training dataset.
While early techniques applied the same threshold to all examples~\cite{yeom2018privacy, shokri2017membership}, current state-of-the-art membership inference attacks~\cite{long2020pragmatic, sablayrolles2019white, watson2021importance, carlini2021membership} choose a carefully calibrated threshold for each example.

Calibration is designed to adjust for the high variance in the ``hardness'' of learning different examples; many examples will have low loss regardless of whether they are included in training, and some examples will continue to have high loss even when they are included in training.
To determine a calibrated threshold for examples, attacks typically compare the logits of models trained without the target example to the logits of models trained with the target example.
In our experiments, we calibrate membership inference and measure attack performance both in terms of accuracy and true positive rate (TPR) at a fixed false positive rate (FPR).
Both of these measurements are bounded by differential privacy~\cite{kairouz2015composition, thudi2022bounding}.

\paragraph{Training Data Extraction}
In training data extraction~\cite{carlini2021extracting}, the adversary wants to recover training data from the model. 
One controlled experiment to measure extraction risk is 
canary extraction~\cite{carlini2019secret}. In canary extraction, $m$ well-formatted canaries $\lbrace s_i\rbrace_{i=1}^m$ are injected into a model's training set, chosen uniformly at random from some larger universe of secret canaries $\mathcal{S}$. The adversary's goal is to guess which of the $\mathcal{S}$ canaries was in fact inserted. Designing the universe of secrets is domain-dependent, so we defer this discussion to the experiments section.
Canary extraction's success is measured with exposure, which roughly computes the reduced entropy in guessing the secret:
\[
\text{Exposure}(s, f) = \log_2(|S|) - \log_2(\text{Rank}(s, \mathcal{S}, \ell)).
\]
The first term here measures the total number of possible canaries, and 
the second term measures the number of possible secrets in $\mathcal{S}$ which have a smaller loss $\ell$ than the true secret $s$. Exposure is thus highest when the injected canary has the lowest loss in the full canary universe.
Following \cite{tramer2022truth}, we also calibrate the canary losses by subtracting the mean loss over multiple reference models (trained without canaries) before ranking.

\subsection{Related Work on Forgetting}

In concurrent work, Tirumala et al.~\cite{tirumala2022memorization} show that large language models forget examples seen early in training. However, their definition of forgetting only captures a specific form of privacy leakage: memorization which can be identified when the model reproduces training examples exactly. 
Instead, our work captures a more general form of privacy leakage because we consider (a) stronger privacy attacks and (b) multiple strategies to measure worst-case forgetting of the training examples. To achieve this, our work leverages recent progress in auditing of privacy in ML~\cite{carlini2021membership, tramer2022truth}.
We also perform our experiments on models and datasets representing a wider variety of data domains. Additionally, we investigate analyze the theoretical limits of forgetting, and propose an explanation for why forgetting happens.

Another empirical exploration of forgetting is found in Graves et al.~\cite{graves2021amnesiac}. The work differs in two ways. Forgetting is explored in the context of machine unlearning (i.e., to determine whether forgetting is sufficient to avoid the need to unlearn altogether). Second,  they find that continual retraining is too slow to allow for forgetting, but their findings hold for small datasets only. Instead, we identify forgetting as especially relevant for large model training. We also measure precisely how long forgetting takes for such large datasets in a variety of domains, and find that it happens quickly enough to be relevant to privacy.

In another related work, Hyland and Tople~\cite{hyland2019empirical} used stochasticity to improve differentially private machine learning for models trained on small datasets. In this paper, we identify stochasticity in training as a potential cause for forgetting.

\section{Forgetting}

Rather than study forgetting through the lens of the model's accuracy on a specific example---as done in the catastrophic forgetting literature---we use an instance-specific notion of forgetting: we attempt to detect a specific example's presence in training.
In catastrophic forgetting, the model's accuracy degrades on an entire sub-distribution. This does not necessarily have implications for forgetting of memorized information contained in individual examples. It is possible that, despite a decrease in accuracy, the model still carries detectable traces of individual examples, which is still harmful for privacy. It is also possible that a high accuracy model, which has not yet forgotten the sub-distribution, generalizes well to the sub-distribution rather than memorizing the specifics of the training set. Thus, our definition of forgetting instead asks a more privacy-motivated question: whether it is possible to detect or extract an example in the training set. 
Hence, we define forgetting by drawing from the literature on attacking ML models to find privacy violations.

\subsection{Defining Forgetting}

We measure the rate of forgetting by evaluating the success rate of a privacy attack.
\begin{definition}
A training algorithm $\mathcal{T}$ is said to ($\mathcal{A}, \alpha$, $k$)-\emph{forget} a training example $z$ if, $k$ steps after $z$ is last used in $\mathcal{T}$, a privacy attack $\mathcal{A}$ achieves no higher than success rate $\alpha$.
\end{definition}

For example, consider the case where we let $\mathcal{A}$ be a MI attack, with success rate measured by the accuracy of the attack, and we set $\alpha=50\%$ as a random guessing baseline.
Then an example is $(\mathcal{A}, \alpha, k)$-forgotten if $k$ steps after training on that example, $\mathcal{A}$ cannot distinguish between the case where the model was, or was not, trained on that example. 

This captures the intuition that a model has ``forgotten'' a training point if it becomes difficult to detect whether or not that point was used in training. 
Because we are interested in measuring privacy risk, our definition allows examples which were never memorized (i.e., $\mathcal{A}$ never performs well) to be forgotten immediately (i.e., $k=0$). However, we will focus our analysis on vulnerable examples and state-of-the-art privacy attacks
where examples are memorized after being used to train $\mathcal{T}$.


Differentially private systems must satisfy a stronger requirement than what we've given: all attacks must perform poorly even $k=0$ steps after an example is seen. 
Additionally, machine unlearning is an even stronger requirement: all possible privacy attacks must fail (where forgetting allows some small success rate: $\alpha$).
Our notion of forgetting uses known attacks, capturing the common strategy of ``empirically testing'' privacy guarantees \cite{jayaraman2019evaluating, jagielski2020auditing, TRMB20, nasr2021adversary, HCTM22}. Additionally, while machine unlearning is an intentional act, forgetting may apply without intervention after some number of steps $k$.

\subsection{Measuring Forgetting}
A straightforward way to measure how much a model forgets is to run a MI attack on samples seen at various training steps. However, this only permits an \emph{average-case} privacy analysis, measuring forgetting for typical examples.
Because we want to understand forgetting from a privacy perspective,
we instead design a procedure to attempt to measure an algorithm's \emph{worst-case} forgetting. 

We adapt privacy auditing strategies \cite{jagielski2020auditing, nasr2021adversary} which consist of two components: (1) constructing a small set of examples $D_p$ (not contained in the standard training set $D$) which the model will be forced to memorize in order to classify correctly, and then (2) running a privacy test $\mathcal{A}$ to determine whether or not these examples were included during training.\footnote{ 
Constructing these training examples is domain-dependent, and so we will discuss precisely how we instantiate them for our experiments in Section~\ref{sec:exp}.}

Our procedure for measuring forgetting trains two models: one model is trained only on $D$, and one is trained on both $D$ and $D_p$.
After training for some number of steps, each model continues training on $D$, never training on $D_p$ again.
After starting this second stage of training only on $D$, we continually measure how well the privacy attack $\mathcal{A}$ performs at predicting which model received the injected samples and which did not.
By design, this procedure measures $(\mathcal{A}, \alpha, k)$-forgetting of $D_p$, providing an attack success rate $\alpha$ for any number of steps $k$ after removing $D_p$.

We consider two ways to instantiate this procedure, which incorporate $D_p$ into training in two distinct ways. The first, called \textsc{Poison}, mimics a fine-tuning setting. 
The second, called \textsc{Inject}, is used to measuring forgetting in extremely large training sets. 
The strategies differ only in how they use $D_p$, and we provide algorithms for both in Appendix~\ref{app:forget_algos}.

\textbf{\textsc{Poison}.} \textsc{Poison} adds $D_p$ into training, and trains on the augmented dataset $D \cup D_p$. After $T_I$ steps, the poisoned examples are removed, and training resumes on the unpoisoned training set $D$. This is best suited for small training sets, where shuffling data does not heavily impact the position of training examples.
This approach reflects fine-tuning, where many passes are made over a pre-training dataset containing sensitive examples, but fine-tuning is done on a disjoint dataset.

\textbf{\textsc{Inject}.} \textsc{Inject} trains a model on the unpoisoned dataset $D$ up until exactly step $T_I$, where it updates with $D_p$, and then resumes training on $D$ thereafter. This approach is preferable when making a small number of passes over a large dataset (as is common in training large language models and speech models), where shuffling can heavily impact the position in training.
\textsc{Inject}  differs from \textsc{Poison} in that the target examples are not included immediately at the beginning of training.
To reflect the impact of duplicated training data, or to conservatively estimate forgetting, we can also inject $D_p$ multiple times during training.

\section{Empirically Measuring Forgetting}
\label{sec:exp}
In this section, we empirically measure whether ML models forget samples over the course of training for large neural network models (and if they do, how quickly do they do so).

\subsection{Experiment Setup}
We investigate forgetting across a variety of datasets and models. We use canary extraction to evaluate generative models (i.e., for LibriSpeech and LMs) and MI for classification models (i.e., for ImageNet).
We provide more detailed experiment setup in Appendix~\ref{app:expsetup}.

\textbf{ImageNet.} We train ResNet-50 models for 90 epochs (each epoch is roughly 5,000 steps). To test forgetting, we mainly use the $\textsc{Inject}$ strategy, due to the large number of steps made in a single ImageNet epoch. We compare with $\textsc{Poison}$ in Appendix~\ref{app:poison_inject}. We start from a fixed checkpoint (with no injected data), and fine-tune this checkpoint for both the injected and baseline models, and perform MI to distinguish the injected predictions from the baseline predictions. We calibrate logit scores based on those produced by the fixed checkpoint, representing a powerful adversary who knows an intermediate update before the inclusion of sensitive data. We report results averaged over three trials.

\textbf{LibriSpeech.} We use state-of-the-art Conformer (L) architecture and the training method from \cite{GulatiConf20} to train models over LibriSpeech. We use the $\textsc{Poison}$ strategy, as we train on LibriSpeech for many epochs. We generate 320 unique canaries and insert them with various repeat counts. We use loss calibration with 11 reference models.

\textbf{Neural Language Models.} We train decoder-only, 110M parameter, Transformer-based language models (with the T5 codebase and architecture from \cite{t52020}) for one epoch over a version of C4 (\cite{dodge2021documenting}) that had been deduplicated with MinHash (\cite{lee2021deduplicating}). 
We use the \textsc{Inject} strategy due to the large training set size and add a single batch of 4096 canaries consisting of random tokens, with between 1 and 100 duplicates for each unique canary.

\begin{figure}
    \centering
     \begin{subfigure}[b]{0.32\textwidth}
         \centering
         \includegraphics[width=\textwidth]{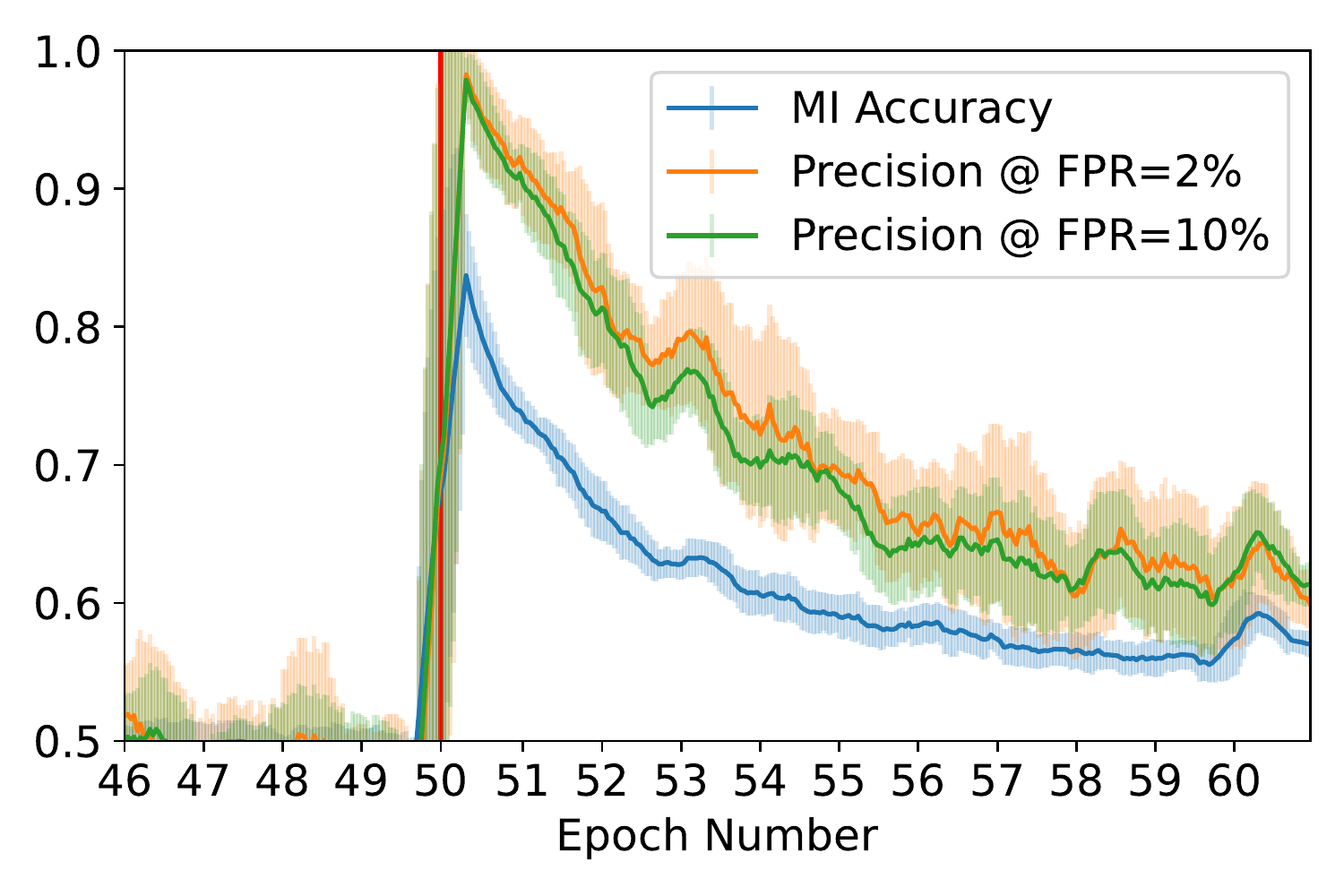}
         \caption{ImageNet Forgetting ($\textsc{Inject}$)}
         \label{fig:imnet_forgets}
     \end{subfigure}
     \hfill
     \begin{subfigure}[b]{0.32\textwidth}
         \centering
         \includegraphics[width=\textwidth]{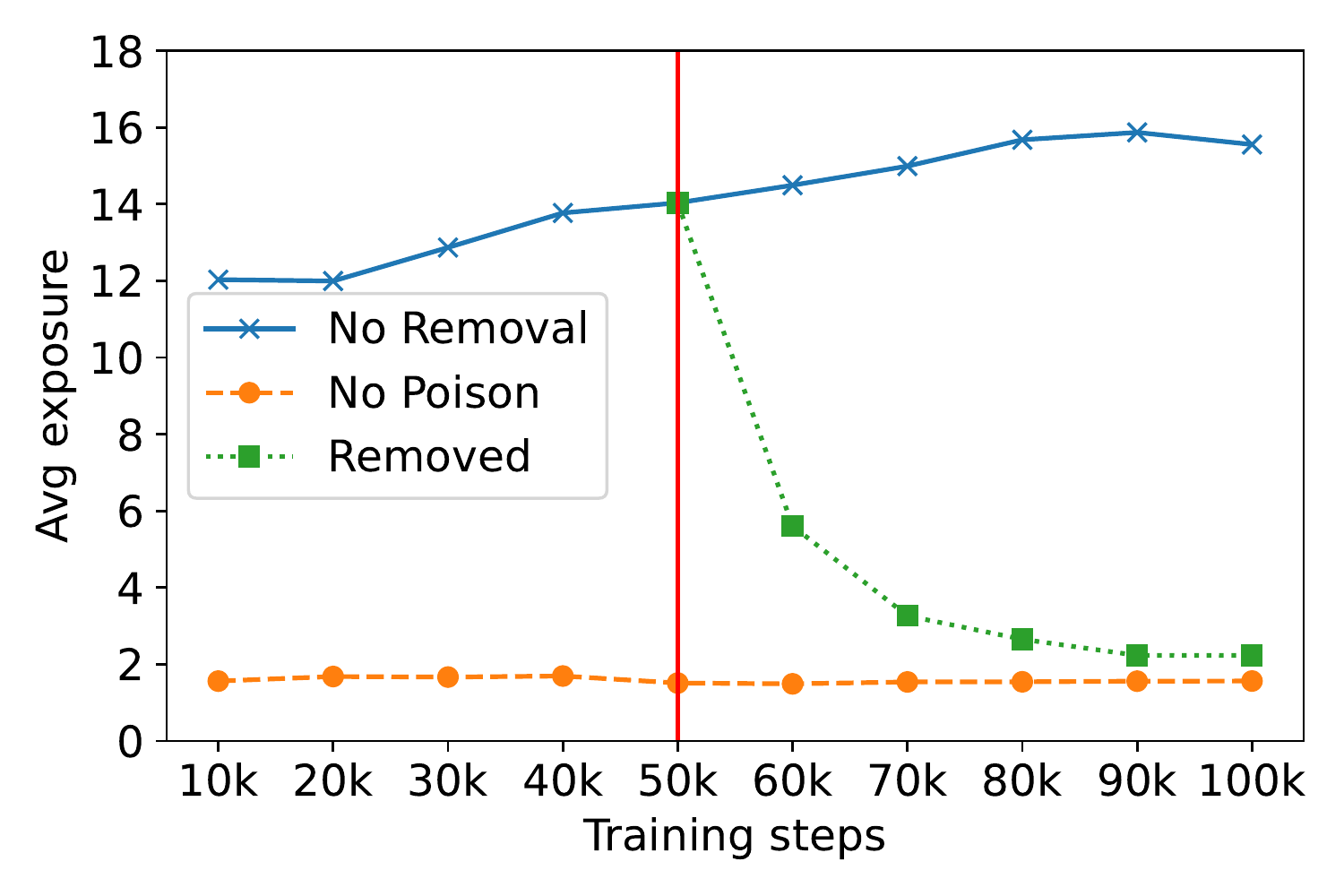}
         \caption{LibriSpeech Forgetting}
         \label{fig:speech_forgets}
    \end{subfigure}
     \begin{subfigure}[b]{0.32\textwidth}
         \centering
         \includegraphics[width=\textwidth]{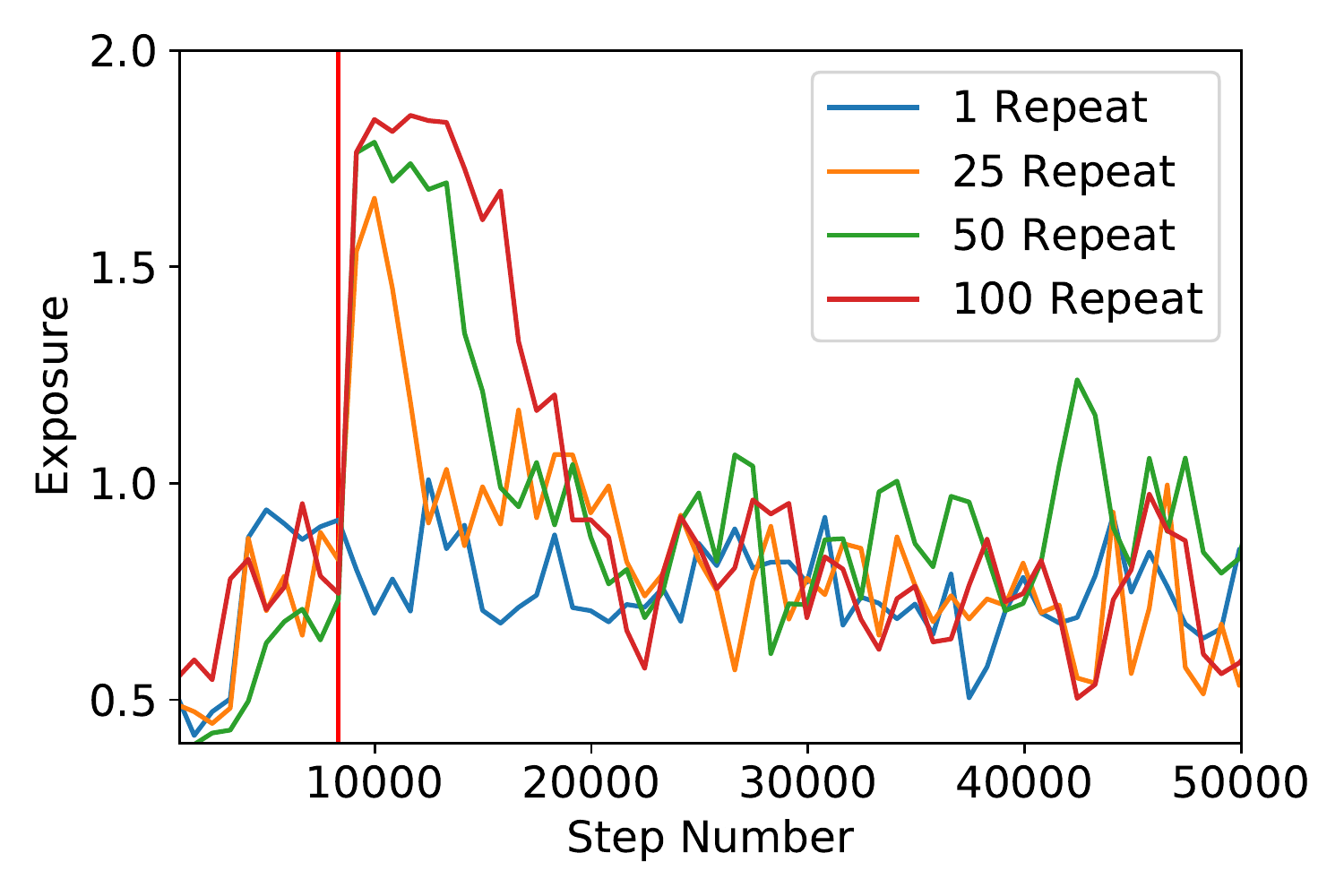}
         \caption{LM Forgetting}
         \label{fig:t5_forgets_dups}
     \end{subfigure}

    \caption{Models forget examples added into training for each dataset, with canary extraction and MI, and with both \textsc{Inject} and \textsc{Poison}. Red vertical lines indicate the position where canary data were seen. For ImageNet, results are shown for $\textsc{Inject}$ with 10 repetitions at epoch 50 while training a ResNet-50 model. On LibriSpeech, with \textsc{Poison}, canaries repeated 20 times and dropped after 50k steps of training see their exposure diminish significantly after training for even 10,000 more steps, and continue to decrease. For LMs, we display a variety of repeat counts for the \textsc{Inject} strategy and observe forgetting in all repeat counts.}
    \label{fig:forgetting_happens}
    \vspace{-5mm}
\end{figure}

\subsection{Forgetting Happens In Practice}
In Figure~\ref{fig:forgetting_happens}, we find that the datasets and models we consider exhibit forgetting. Starting with Figure~\ref{fig:forgetting_happens}(a), we train a model on the ImageNet training dataset and then, following the \textsc{Inject} approach, train the model on a single minibatch containing the poisoned examples repeated 10 times at epoch 50.
(The red vertical line here corresponds to the timestep where injection occurs.)
We then plot the MI accuracy and precision as a function of
time, both before injection and also for ten epochs after.
We observe that MI is impossible until the injection step, after which precision remains very high for many epochs, taking 10 epochs to decay to roughly 65\%, which is permitted by DP\footnote{We compute $\varepsilon$ as $\ln(\text{TPR}/\text{FPR})$ \cite{kairouz2015composition}, using the \cite{zanella2022bayesian} heuristic.} with $\varepsilon\approx 0.6$.
This demonstrates that forgetting \emph{does} occur for this setting.

Figures~\ref{fig:forgetting_happens}(b-c) repeat this experiment for two other datasets: LibriSpeech and C4, where we use canary extraction for both.
In Figure~\ref{fig:forgetting_happens}(b) we plot results from LibriSpeech with the $\textsc{Poison}$ strategy, with canaries repeated 20 times. 
We find that the canary exposure decays rapidly after canaries are removed at 50,000 steps, decaying from 14.0 to 5.6 in the first 10,000 steps, which indicates the canaries become 330$\times$ harder to guess on average. After another 40,000 steps, the exposure drops to roughly 2, only slightly higher than the baseline exposure for canaries \emph{not} seen during training.
In Figure~\ref{fig:forgetting_happens}(c), we show that, for a variety of repeat counts, LMs forget canaries. They reach an exposure of 2, and decay over the course of at most 10,000 steps to the baseline ``random guessing'' level of 1. In Appendix~\ref{app:poison_inject}, we show that, on ImageNet, \textsc{Poison} and \textsc{Inject} give similar results.

We note that in Figure~\ref{fig:forgetting_happens}, we selected representative parameter settings, but we find that the same forgetting behavior qualitatively holds over all settings we tried. However, these parameters \emph{can} impact, quantitatively, how quickly forgetting happens.

\subsection{Investigating Forgetting}
We investigate the impact of various parameters on the speed of forgetting. We investigate 1) how many times a point is repeated in training/how hard the point is to learn, and 2) how late the example is seen in training, 3) model size, and 4) optimizer parameters. While repetitions and hardness are the parameters that we find to be most important, each of the other parameters also has some impact. We analyze these parameters in depth on ImageNet, and a subset of them on LibriSpeech and C4.

\textbf{Repetitions/Hardness.}
Privacy attacks are not equally effective on all examples. Large language models memorize examples which are duplicated many times in their training sets \cite{lee2021deduplicating, kandpal2022deduplicating, carlini2022quantifying}. Among examples which are not duplicated, there exist ``outliers'', which are more vulnerable to attacks \cite{kulynych2019disparate, carlini2021membership}. Here, we show that these findings also hold in forgetting: outlier data is forgotten more slowly than inlier data, and data duplicated many times is forgotten more slowly than data duplicated few times. We use our forgetting definition to measure this speed, as it reflects the privacy risk to examples (although, especially for exposure, this definition does depend on the initial success rate of the attack). 

We present our results in Figures~\ref{fig:dups} and \ref{fig:t5_forgets_dups}. Our findings corroborate those made in prior work. Repetitions have a very clear impact on forgetting: examples repeated more times both start out with a much higher attack success, and the models take much longer to forget them, as well. On ImageNet, attack precision takes roughly 20,000 training steps to drop below 70\% with 10 repetitions, which is higher than 1 repetition 200 steps after injection. On LibriSpeech and C4, canary exposure has a similar trend, where heavily repeated canaries have higher exposures, and decrease to a given exposure more slowly. We also validate on ImageNet that hard examples (test points with high loss on a fully-trained model) are forgotten more slowly than easy examples, in Figure~\ref{fig:imnet_hardness}. These results reinforce the necessity of considering worst-case samples when measuring privacy.


\begin{figure}
    \centering
     \begin{subfigure}[b]{0.32\textwidth}
         \centering
         \includegraphics[width=\textwidth]{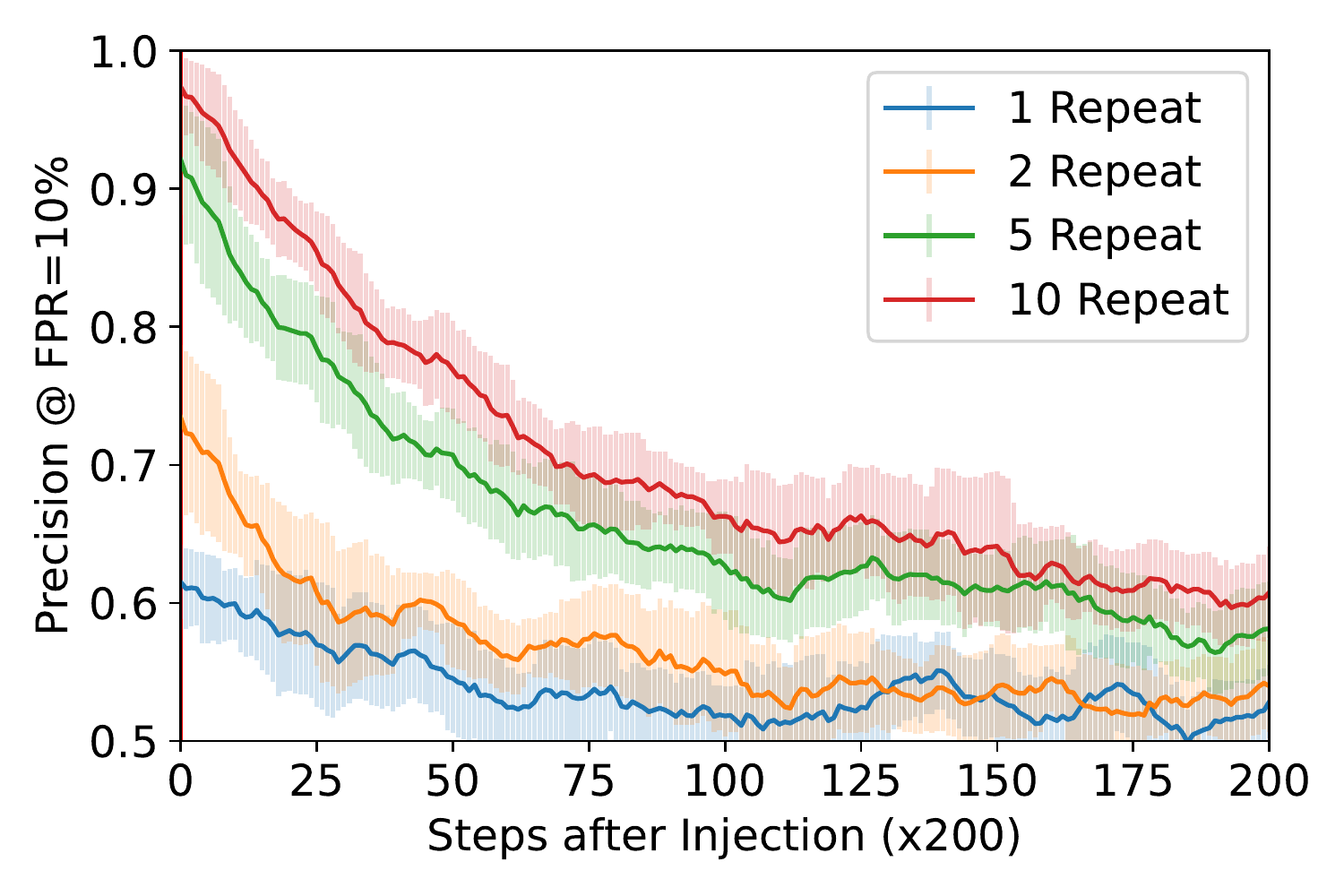}
         \caption{ImageNet Repeats (40 Epochs)}
         \label{fig:imnet_repeat}
     \end{subfigure}
     \hfill
     \begin{subfigure}[b]{0.32\textwidth}
         \centering
         \includegraphics[width=\textwidth]{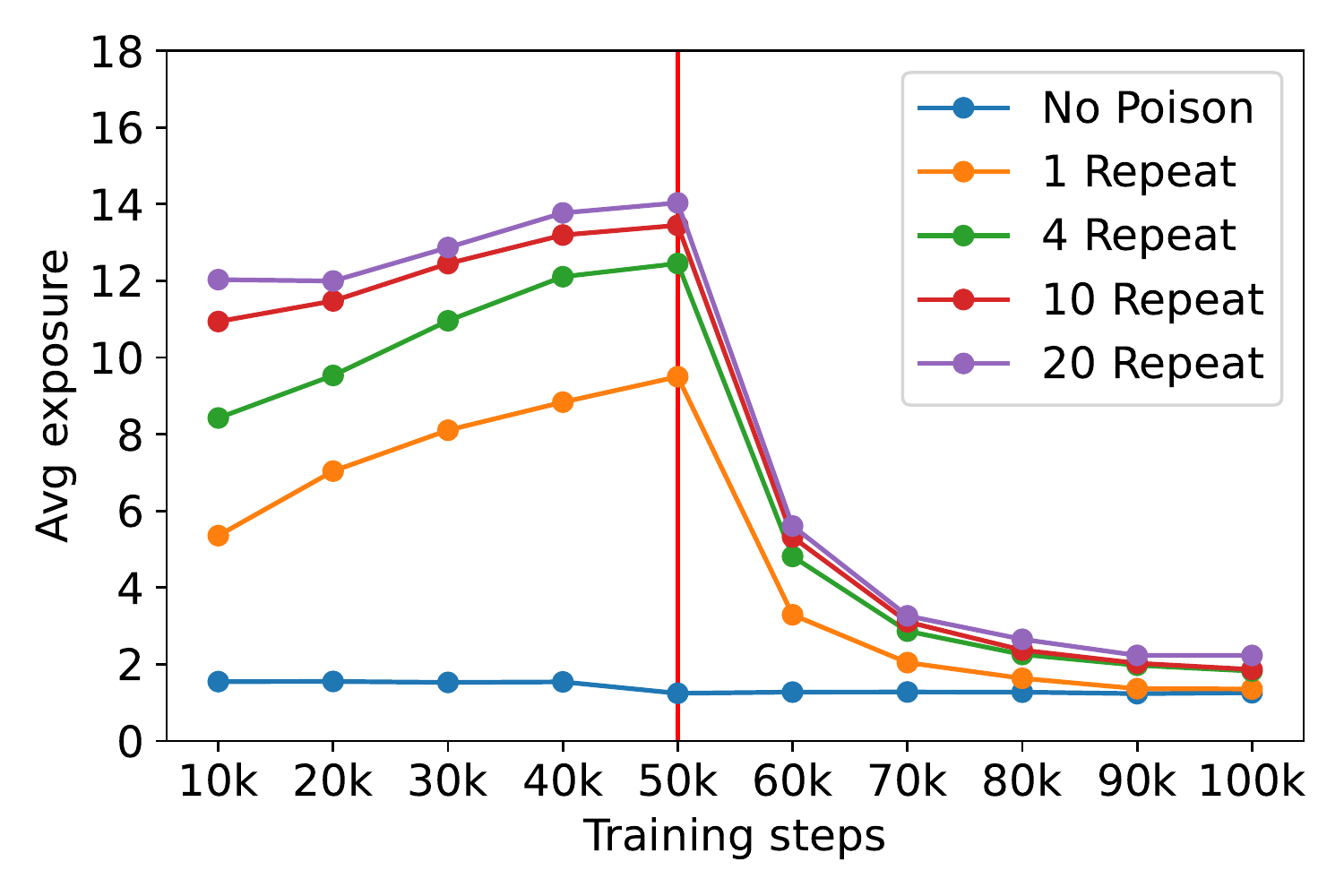}
         \caption{LibriSpeech Repeats (50k steps)}
        \label{fig:libri_dups}
     \end{subfigure}
     \hfill
     \begin{subfigure}[b]{0.32\textwidth}
         \centering
         \includegraphics[width=\textwidth]{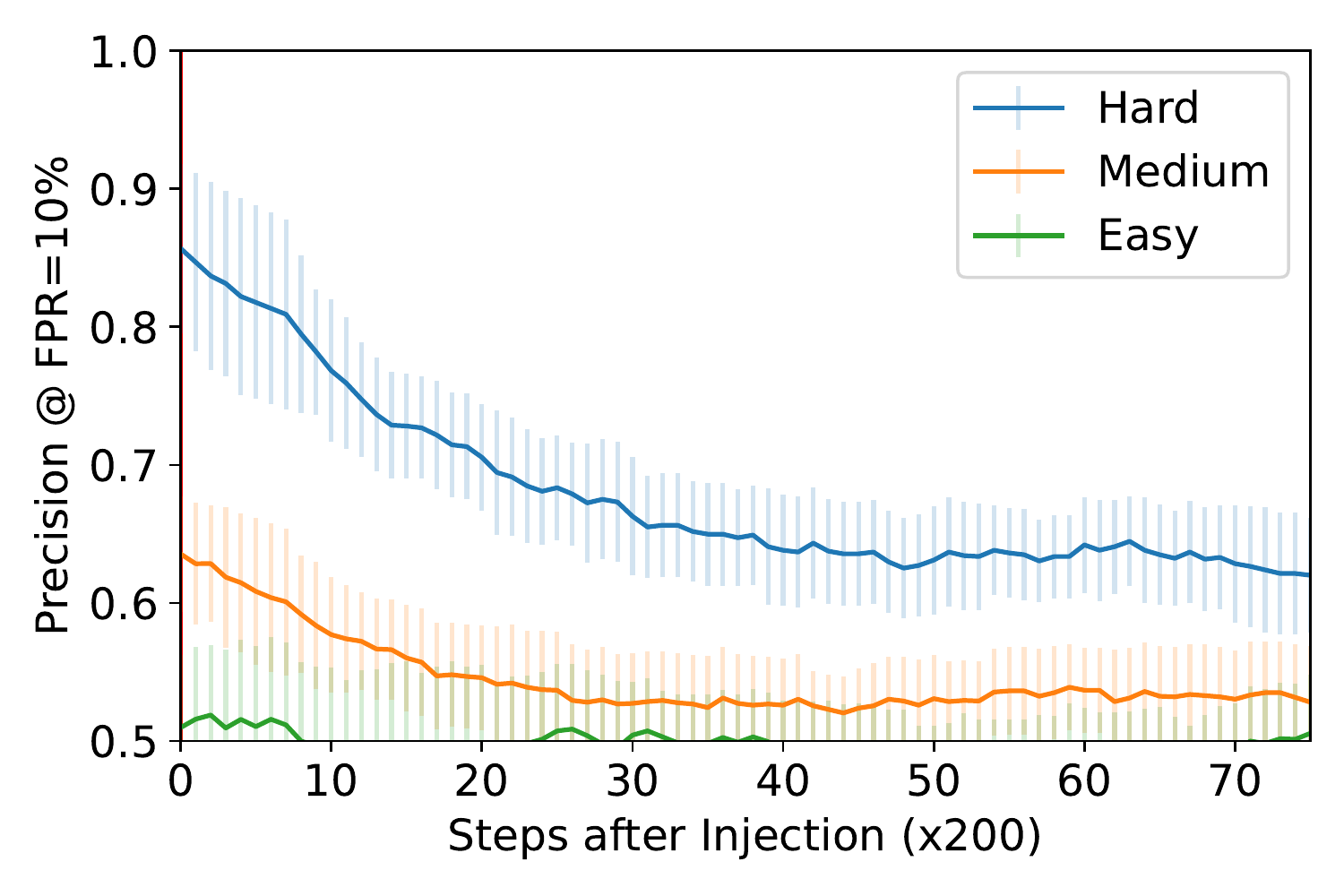}
         \caption{ImageNet Hardness (5 Repeats)}
         \label{fig:imnet_hardness}
     \end{subfigure}
    \caption{We find that
    \textbf{(a-b)} Examples repeated multiple times are harder to forget;
    \textbf{(c)} More difficult examples are harder to forget than easier ones.
    }
    \label{fig:dups}\vspace{-5mm}
\end{figure}

\begin{wrapfigure}{r}{.4\textwidth}
    \vspace{-5mm}
    \centering
    \includegraphics[width=.4\textwidth]{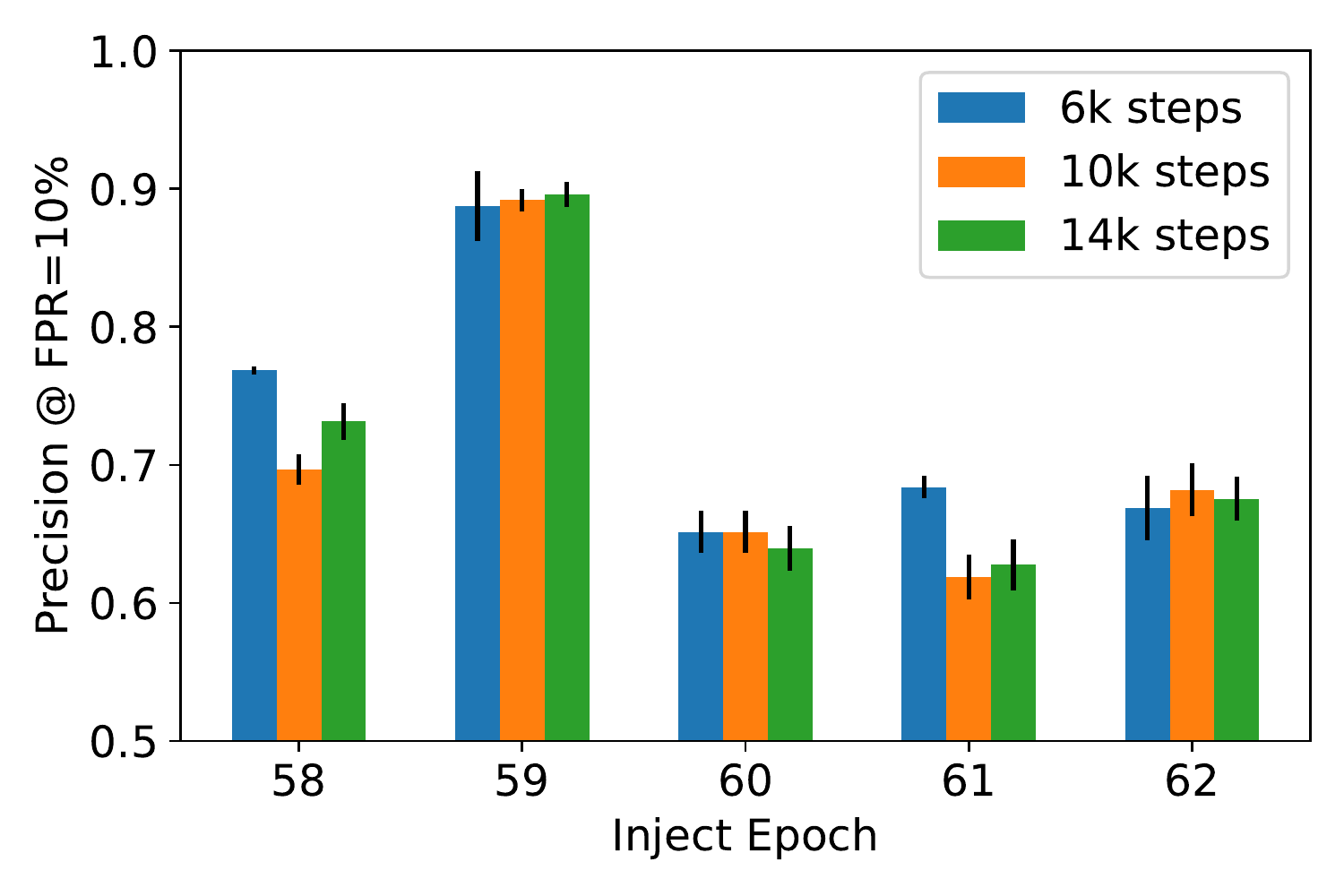}
    \caption{A 10$\times$ learning rate decay at epoch 60 causes changes in forgetting near epoch 60. Examples injected before the decay are forgotten more slowly, as they contribute 10$\times$ as much as examples injected after the decay.}
    \label{fig:lr_decay}
    \vspace{-7mm}
\end{wrapfigure}


\textbf{Learning Rate Decay and Other Parameters.} On ImageNet, we experiment with recency, learning rate, momentum, and model size, and find these mostly have little effect.
The limited effect of model size on forgetting is somewhat surprising---and possibly due to our smallest tested model being large enough for significant memorization to occur.
Due to their small effect, we discuss these parameters in Appendix~\ref{app:exp}, and focus here on learning rate decay. Our training decays learning rate by a factor of 10 at Epoch 60, and we plot in Figure~\ref{fig:lr_decay} the precision of MI for examples injected near this decay. We find that examples injected before the decay are forgotten significantly slower than examples injected after. An update before the decay will have 10$\times$ the learning rate of subsequent updates, so these will have less ability to influence forgetting.

\section{Understanding Forgetting}
\label{sec:theory}
As we have shown, forgetting empirically occurs in deep learning. In this section, we investigate why. We start by showing a setting where $k$-means, due to its non-convex loss function, does not forget training data, contrasting with our empirical results on large non-convex models, potentially because attacks struggle to take advantage of the optimization trajectory.
To understand this inconsistency, we offer a possible explanation: randomness in training can lead to forgetting. To support this explanation, we first show that, without nondeterminism, forgetting does not happen in deep learning. We show this empirically as well as formally for SGD on a task of mean estimation.\footnote{Mean estimation is frequently used in the privacy literature to investigate attacks \cite{homer2008resolving, dwork2015robust} and design differentially private algorithms \cite{bun2019average, kamath2019privately}.} We then prove that if the training data is randomly shuffled, forgetting does happen for SGD on this mean estimation task---thus corroborating our empirical results.

\subsection{Non-convexity Can Prevent Forgetting}

\begin{figure}
    \centering
     \begin{subfigure}[b]{0.32\textwidth}
         \centering
         \includegraphics[width=\textwidth]{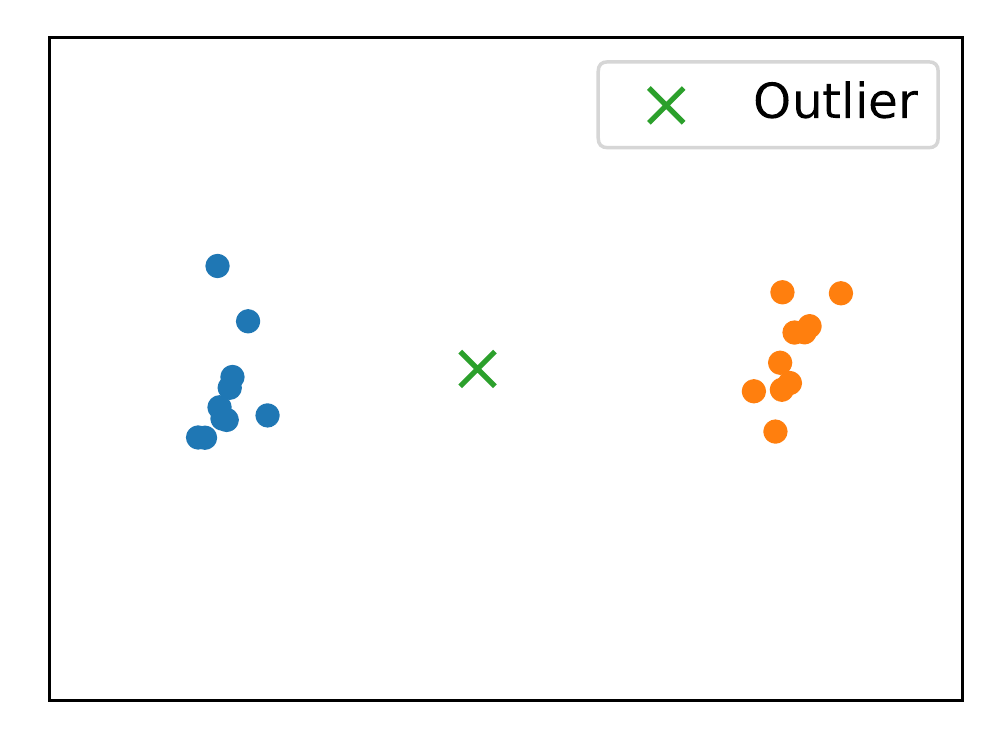}
         \caption{$k$-means Original Dataset}
         \label{fig:kmeans_d0}
     \end{subfigure}
     \hfill
     \begin{subfigure}[b]{0.32\textwidth}
         \centering
         \includegraphics[width=\textwidth]{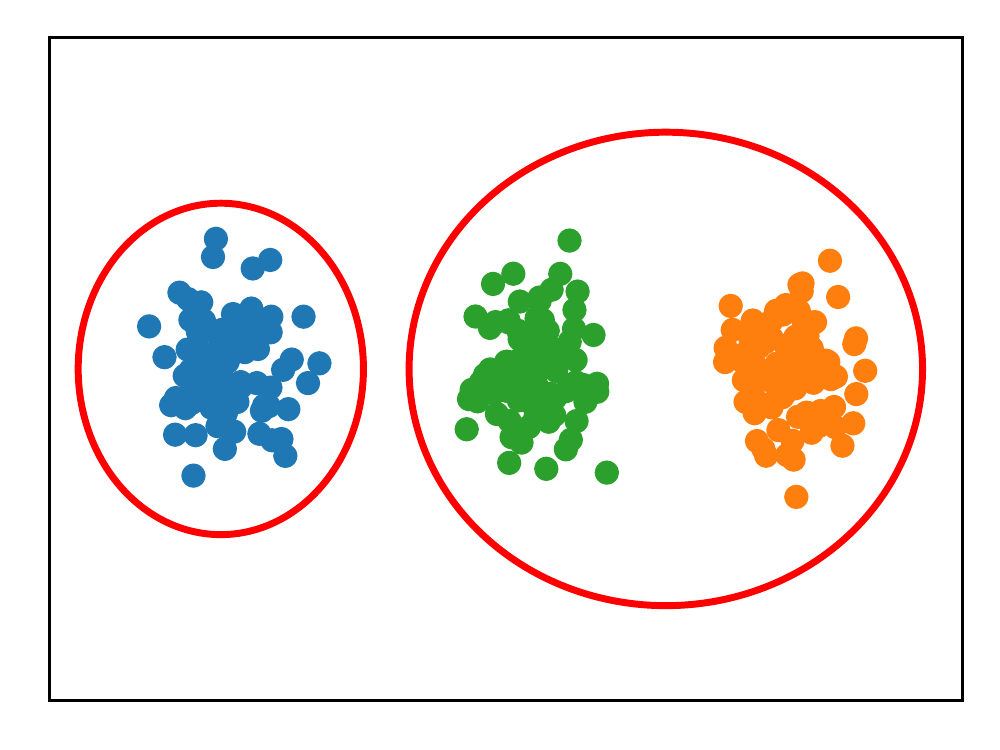}
         \caption{Add data, $c_2$ outlier is OUT}
         \label{fig:kmeans_OUT}
     \end{subfigure}
     \hfill
     \begin{subfigure}[b]{0.32\textwidth}
         \centering
         \includegraphics[width=\textwidth]{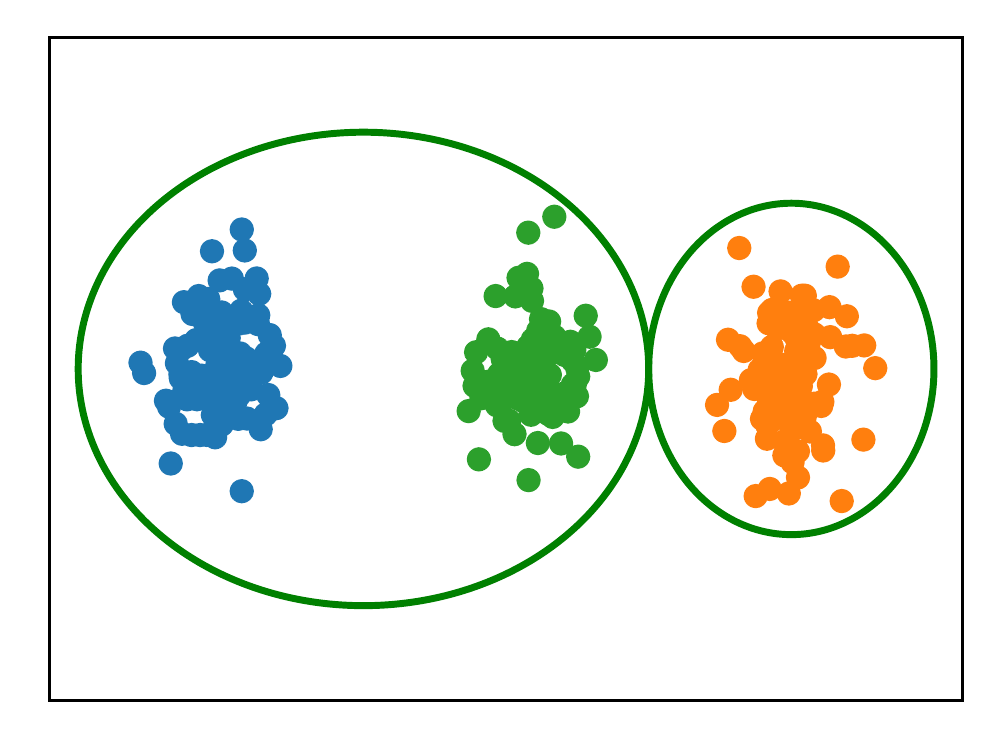}
         \caption{Add data, $c_2$ outlier is IN}
         \label{fig:kmeans_IN}
     \end{subfigure}
    \caption{The non-convexity of the $k$-means objective can prevent forgetting. When an outlier (green \textcolor{teal}{$\boldsymbol{\times}$} in Subfigure~\ref{fig:kmeans_d0}) is present in the initial training set, the clustering in Subfigure~\ref{fig:kmeans_IN} is learned; when the outlier is not present, the clustering in Subfigure~\ref{fig:kmeans_OUT} is learned. Adding more data does not result in the outlier's influence being forgotten. Left to right, clusters are $c_1$, $c_2$, and $c_3$ in the text. We add a second dimension to the setting we discuss in the text for presentation.}
    \label{fig:kmeans}
    \vspace{-5mm}
\end{figure}

We show here that there exists a data distribution where $k$-means clustering will not forget certain examples, due to its non-convex objective. The $k$-means problem can have many local optima, and we will construct a setting where a single example can force the model into one of these optima, where the model will remain even after more examples are added. The setting we consider is the one of a one-dimensional clustering task with three clusters in the data, $c_1$, $c_2$, and $c_3$, but where $k$-means is configured to use $k=2$ clusters. As a result, $k$-means needs to choose to ``merge'' one cluster, $c_2$, with the others. We can construct an outlier from the $c_2$ cluster which will manipulate this decision, and adding more data does not lead to this choice being forgotten. The setting is illustrated in Figure~\ref{fig:kmeans} (with a two-dimensional clustering task for clarity of exposition). We provide a more concrete description of the dataset we consider in Appendix~\ref{app:kmeans}, but we verify empirically that this setting results in near-perfect MI on the outlier point, with 97\% accuracy and perfect precision. \textbf{This is the first example of privacy leakage we are aware of that relies on non-convexity.} There is a large body of differentially private algorithms which are specific to convex models, and do not apply to non-convex settings \cite{chaudhuri2011differentially, talwar2015nearly, wu2017bolt, iyengar2019towards}. Our analysis here is the first to concretely justify this separation.

\subsection{Deterministic SGD Does Not Forget}
\label{ssec:det}

\begin{wrapfigure}{r}{.4\textwidth}
\vspace{-10mm}
    \centering
    \includegraphics[width=0.4\textwidth]{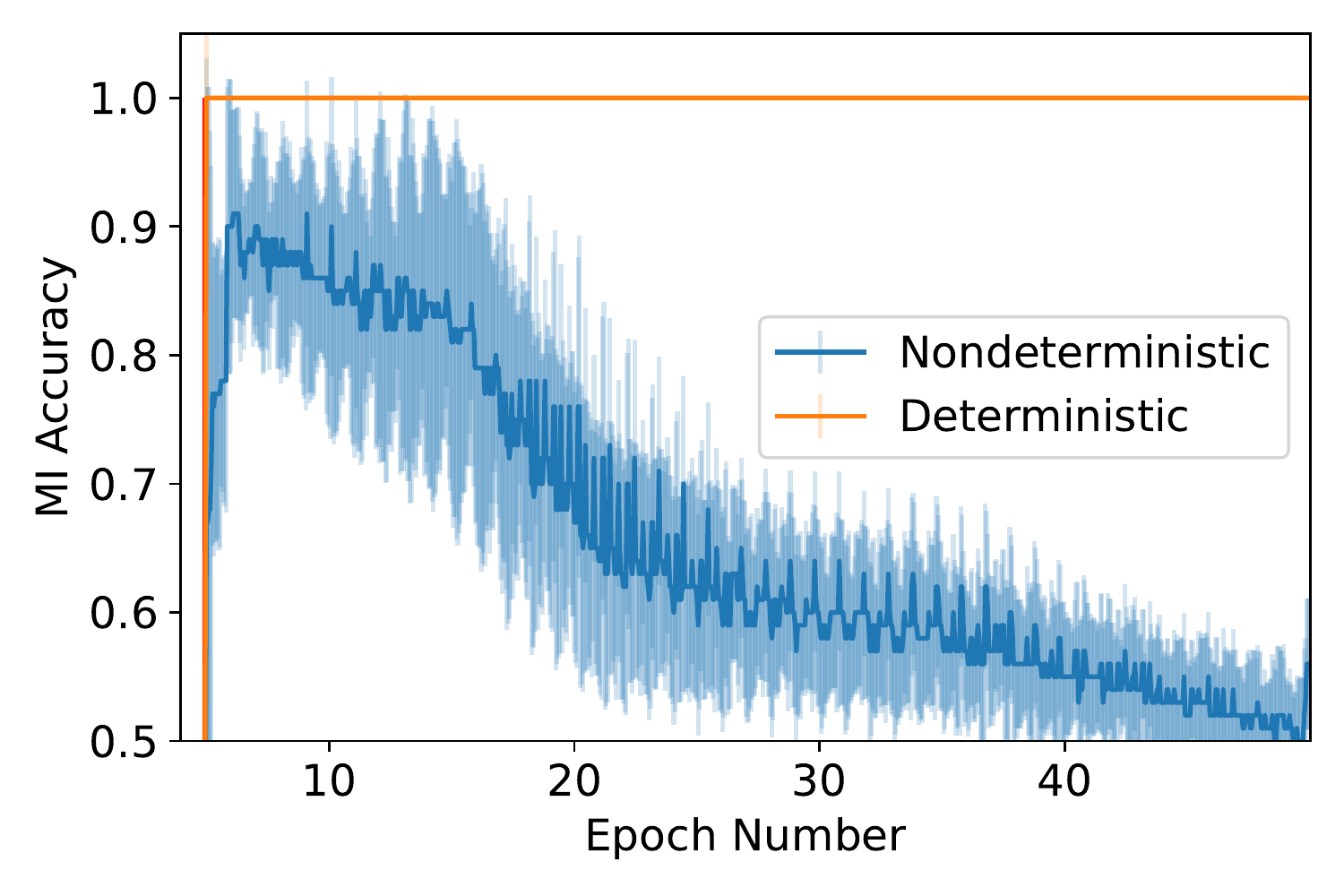}
    \caption{For a neural network on FMNIST, an adversary who knows the exact batch order and training set, doesn't experience forgetting (using \textsc{Inject} with 1 repeat), with MI accuracy always perfect, even after 45 epochs of updates. Without this perfect knowledge, forgetting happens steadily.}
    \label{fig:deterministic}
    \vspace{-2mm}
\end{wrapfigure}

Having observed a (constructed) setting in which forgetting does not happen, we now turn our attention to why forgetting happens when training a neural network. Earlier, we hypothesized that nondeterminism leads to the forgetting we observed empirically. Here, we support this by showing that some unknown nondeterminism is required for forgetting, before continuing in Section~\ref{ssec:nondet} to show a source of nondeterminism that can cause forgetting.

The key idea of this section is that an adversary with knowledge of the entire training set and batch order can simulate the training on $D$ and on $D\cup D_p$, stepping through training in the exact same way as the learner, and receiving the exact same model at each step. Because of the update on $D_p$, these models are never exactly equal, and, because training is deterministic, it is always possible to distinguish them.
In Figure~\ref{fig:deterministic}, we see that this attack is possible in practice, on a logistic regression model trained on the FMNIST dataset with a fixed batch order, using the \textsc{Inject} strategy. By simulating training, the adversary achieves 97\% MI accuracy for examples seen in Epoch 5, even after 95 epochs of training (we expect the 3\% drop to be a result of unavoidable GPU nondeterminism). In Appendix~\ref{app:det}, we prove that this attack results in perfect MI on a deterministic version of SGD for mean estimation. In this mean estimation setting, adding Gaussian noise to the updates would \emph{provably ensure} forgetting for $D_p$ using the results of Feldman et al.~\cite{feldman2018privacy}, so it is the nondeterminism itself which prevents forgetting. 

\subsection{Random Sampling Leads To Forgetting}
\label{ssec:nondet}
Having shown that forgetting doesn't happen when all nondeterminism is known to the adversary, we now show that some amount of unknown nondeterminism can cause forgetting. While Feldman et al.~\cite{feldman2018privacy} show that differentially private noise is a sufficient level of nondeterminism to provably cause forgetting, we consider nondeterminism coming from randomized data sampling. 

We consider mean estimation, where our examples are drawn from a $d$-dimensional Gaussian distribution with mean $\mu$ and covariance $\Sigma$, i.e., $\mathcal{D} = \mathcal{N}(\mu, \Sigma)$. We are asked to produce a guess for the mean $\tilde \mu$.
We estimate the mean with gradient descent with learning rate $\eta$, and we ask: does forgetting occur in mean estimation when the data is randomly sampled from this distribution?

We show that it does, by considering a simple distinguishing test.
Consider two training runs, both starting SGD from the same fixed initial parameters $\theta_0$. 
At the very first timestep, we inject two different arbitrary examples, $v$ and $v'$, into the two models.
That is, we compute $\theta_0^+$ by taking a gradient step on $v$ 
, and $\theta_0^-$ by taking a gradient step on $v'$. 
We then continue SGD on randomly sampled examples from $\mathcal{D}$.
We write the $k$\textsuperscript{th} step of each run as $\theta_k^+$ (for those continuing from $\theta_0^+$) and $\theta_k^-$ (respectively). 
In Theorem~\ref{thm:meanforgets}, we bound the R\'enyi divergence between the distributions of $\theta_k^+$ and $\theta_k^-$.
%
R\'enyi divergence is the divergence used in R\'enyi DP \cite{mironov2017renyi}, and is known to bound the success of MI attacks and reconstruction attacks \cite{stock2022defending}, so a smaller R\'enyi divergence implies that these attacks will perform worse, and the model will forget which of $v$ and $v'$ was used in training.
To simplify the analysis, we use $v'=-v$, although a similar result can be shown for arbitrary pairs of $v$ and $v'$.

\begin{theorem}
The distributions of the models $\theta_k^+$ and $\theta_k^-$ produced with learning rate $0<\eta<1/2$ have R\'enyi divergence of order $\alpha$ at most $\tfrac{2\alpha}{k}v^\top\Sigma^{-1}v$.
\label{thm:meanforgets}
\end{theorem}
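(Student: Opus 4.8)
The plan is to track the closed-form solution of the gradient-descent recursion for mean estimation, reduce the two runs to a pair of Gaussians with a common covariance, and then invoke the standard formula for the R\'enyi divergence between such Gaussians. First I would write down the one-step update: a gradient step on example $x$ from parameters $\theta$ gives $\theta' = \theta - 2\eta(\theta - x) = (1-2\eta)\theta + 2\eta x$, so unrolling $k$ steps from an initial parameter $\theta_0'$ on i.i.d.\ samples $x_0,\dots,x_{k-1}\sim\mathcal D$ yields $\theta_k = (1-2\eta)^k\theta_0' + 2\eta\sum_{j=0}^{k-1}(1-2\eta)^{j}x_{k-1-j}$. Since each $x_i\sim\mathcal N(\mu,\Sigma)$ independently, $\theta_k$ is Gaussian with mean $\mu + (1-2\eta)^k(\theta_0'-\mu)$ and covariance $4\eta^2\Sigma\sum_{j=0}^{k-1}(1-2\eta)^{2j} = 4\eta^2\,\frac{1-(1-2\eta)^{2k}}{1-(1-2\eta)^2}\,\Sigma$, which simplifies (using $1-(1-2\eta)^2 = 4\eta(1-\eta)$) to $\frac{\eta\bigl(1-(1-2\eta)^{2k}\bigr)}{1-\eta}\Sigma$.

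Next I apply this with the two injected starting points. After the first injection step, $\theta_0^+ = (1-2\eta)\theta_0 + 2\eta v$ and $\theta_0^- = (1-2\eta)\theta_0 - 2\eta v$, and the subsequent $k$ clean steps are exactly the recursion above (with $k$ replaced appropriately by the number of post-injection steps, which I will just call $k$). So $\theta_k^+$ and $\theta_k^-$ are Gaussians with the \emph{same} covariance $\Sigma' := \frac{\eta(1-(1-2\eta)^{2k})}{1-\eta}\Sigma$ and with means differing by $\Delta := (1-2\eta)^k(\theta_0^+-\theta_0^-) = (1-2\eta)^k\cdot 4\eta v$. The R\'enyi divergence of order $\alpha$ between two Gaussians with equal covariance $\Sigma'$ is $\frac{\alpha}{2}\Delta^\top\Sigma'^{-1}\Delta$; substituting gives $\frac{\alpha}{2}\cdot 16\eta^2(1-2\eta)^{2k}\cdot\frac{1-\eta}{\eta(1-(1-2\eta)^{2k})}\, v^\top\Sigma^{-1}v = 8\alpha\,\frac{\eta(1-\eta)(1-2\eta)^{2k}}{1-(1-2\eta)^{2k}}\, v^\top\Sigma^{-1}v$.

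The last step is to bound the scalar factor $f(\eta) := \frac{\eta(1-\eta)(1-2\eta)^{2k}}{1-(1-2\eta)^{2k}}$ uniformly over $0<\eta<1/2$ for fixed $k\ge 1$. I expect this to be the only genuinely fiddly part: I would argue $f$ is decreasing on $(0,1/2)$ (differentiating, or more cleanly by writing $t = (1-2\eta)^2\in(0,1)$ and checking monotonicity of the resulting expression in $t$ combined with the monotone change of variables), so its supremum is the limit as $\eta\to 0^+$. That limit is $\lim_{\eta\to0}\frac{\eta(1-2\eta)^{2k}}{1-(1-2\eta)^{2k}}$, an $\tfrac00$ form; by L'H\^opital (or by $1-(1-2\eta)^{2k}\sim 4k\eta$) it equals $\tfrac{1}{4k}$. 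Hence the divergence is at most $8\alpha\cdot\tfrac{1}{4k}\cdot v^\top\Sigma^{-1}v = \tfrac{2\alpha}{k}v^\top\Sigma^{-1}v$, as claimed. The main obstacle, such as it is, is verifying the monotonicity of $f$ cleanly rather than through a messy derivative computation; everything else is bookkeeping with geometric series and the known Gaussian R\'enyi-divergence identity.
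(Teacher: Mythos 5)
Your proposal is correct and follows essentially the same route as the paper's proof: unrolling the linear recursion into a geometric sum, identifying $\theta_k^+$ and $\theta_k^-$ as Gaussians with common covariance $\tfrac{\eta(1-(1-2\eta)^{2k})}{1-\eta}\Sigma$ and mean gap $4\eta(1-2\eta)^k v$, applying the equal-covariance R\'enyi formula to get $8\alpha f(\eta)\, v^\top\Sigma^{-1}v$, and bounding $f(\eta)$ by its monotone limit $\tfrac{1}{4k}$ as $\eta\to 0^+$ via L'H\^opital. The only difference is cosmetic (your suggested substitution $t=(1-2\eta)^2$ for the monotonicity check, where the paper just asserts the derivative is negative), so there is nothing substantive to add.
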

Theorem~\ref{thm:meanforgets} states that distinguishing between the models $\theta_k^+$ and $\theta_k^-$ becomes harder as training progresses, as well as when $v^\top\Sigma^{-1}v$ is small (when $v$ is more ``in-distribution''). These agree with observations made in our experiments: forgetting improves as $k$ increases (although with diminishing returns), and more worst-case (or heavily repeated) examples are forgotten more slowly (here, if $v^\top\Sigma^{-1}v=0$, forgetting never happens!). We note that this theorem is not operational, as it is impossible to know whether the Gaussian assumption holds in practice \cite{averagedp}, but our analysis provides intuition for our experiments, suggesting further that this source of nondeterminism may lead to the empirical forgetting we observe. Combined with Theorem~\ref{thm:det} in Appendix~\ref{app:det}, which also considers mean estimation, Theorem~\ref{thm:meanforgets} shows that it is the unknown nondeterminism coming from the data which causes forgetting.

\section{Conclusion}
In forgetting, we demonstrate a passive form of privacy amplification: examples used early in model training may be more robust to privacy attacks.
Our findings align with the theoretical findings of Feldman et al.~\cite{feldman2018privacy} (for convex optimization); improved bounds on privacy leakage for early examples also translate to better empirical privacy. Our analytical and empirical findings highlight some previously poorly understood factors underlying forgetting. The size of the dataset is key: forgetting is more likely to happen when (a) the learning algorithm randomly samples training examples at each step (e.g. as in SGD) and (b) these examples are sampled from a large training set (e.g., when training modern language models). Training sets should be large enough for an example to not be seen for thousands of steps before releasing the model. These findings will be useful to practitioners auditing privacy provided by their training algorithm and complement formal analysis of their guarantees (e.g., using DP). Our work suggests the following directions for future investigation.

\textbf{Protecting the final examples.} Our work suggests that, for large scale training algorithms, the most recently seen examples are most vulnerable to privacy attacks. Defenses may take advantage of this to improve empirical privacy by focusing their privacy efforts on the most recently seen examples. The number of examples to protect could be determined with an empirical forgetting analysis.

\textbf{Non-convexity-aware MI.} In route to understanding forgetting, we demonstrated the first setting where the optimum that a model ends up at leaks information about training examples. While this attack only applies to a specific $k$-means setting, it is interesting whether non-convexity can be exploited to design new MI attacks on realistic tasks.

\textbf{Limitations.} While our empirical investigation uses state-of-the-art attacks, it remains inherently heuristic, and cannot prove that forgetting happened. However, we have no reason to expect that the trends we identify do not hold against even stronger attacks. Indeed, in some cases our attacks exploit more knowledge than a realistic adversary can access in practice. Further developments in the techniques we rely on here might be taken advantage of to more faithfully measure forgetting.

\subsubsection*{Acknowledgements}
The authors would like to thank Rajiv Mathews, Hakim Sidahmed, and Andreas Terzis for helpful discussion. 
Katherine Lee's research is supported in part by a Cornell University Fellowship.
Daphne Ippolito's research is supported in part by the DARPA KAIROS Program (contract FA8750-19-2-1004).  The views and conclusions contained herein are those of the authors and should not be interpreted as necessarily representing the official policies, either expressed or implied, of DARPA or the U.S. Government.

\bibliographystyle{plain}
\bibliography{references}

\appendix

\section{Algorithms for Measuring Forgetting}
\label{app:forget_algos}
We present the algorithm for \textsc{Poison} in Algorithm~\ref{alg:forgetting_poison} and the algorithm for \textsc{Inject} in Algorithm~\ref{alg:forgetting_inject}.

\begin{algorithm}
\SetAlgoLined
\DontPrintSemicolon
\KwData{Training Algorithm $\textsc{Train}$, Clean Dataset $D$, Poison Samples $D_p$, Removal Step $T_I$, Initial Model Parameters $\theta_0$, Total Training Steps $T$, Privacy Attack $\mathcal{A}$}
\SetKwFunction{MeasureForgetPoison}{\textsc{MeasureForgetPoison}}
\SetKwFunction{Poison}{\textsc{PoisonTrain}}
\SetKwFunction{Train}{\textsc{Train}}

\SetKwProg{Fn}{Function}{:}{}
\Fn{\MeasureForgetPoison{}}{
    $\theta^0_{T_I}=\Train(\theta_0, D, T_I)$ \;
    $\theta^1_{T_I}=\Poison(\theta_0, D, D_p, T_I)$
    
    \For{$i=(T_I+1)..T$}{
        $\theta^0_{i}=\Train(\theta^0_{i-1}, D, 1)$ \;
        $\theta^1_{i}=\Train(\theta^1_{i-1}, D, 1)$ \;
        $\textsc{Acc} = \mathcal{A}(\theta^0_{i}, \theta^1_{i}, D_p)$  \Comment{Continuously monitor privacy attack success on $D_p$}
    }
    
}

\SetKwProg{Fn}{Function}{:}{}
\Fn{\Poison{$\theta, D, D_p, T_I$}}{
    \Return $\textsc{Train}(\theta, D\cup D_p, T_I)$ \Comment{Train with poisoned data over multiple epochs} \;
}
\caption{Monitoring Forgetting Throughout Training with $\textsc{Poison}$}
\label{alg:forgetting_poison}
\end{algorithm}

\begin{algorithm}
\SetAlgoLined
\DontPrintSemicolon
\KwData{Training Algorithm $\textsc{Train}$, Clean Dataset $D$, Poison Samples $D_p$, Injection Step $T_I$, Injection Count $k$, Initial Model Parameters $\theta_0$, Total Training Steps $T$, Privacy Attack $\mathcal{A}$}
\SetKwFunction{MeasureForgetInject}{\textsc{MeasureForgetInject}}
\SetKwFunction{Inject}{\textsc{InjectTrain}}
\SetKwFunction{Train}{\textsc{Train}}

\SetKwProg{Fn}{Function}{:}{}
\Fn{\MeasureForgetInject{}}{
    $\theta^0_{T_I}=\Train(\theta_0, D, T_I)$ \;
    $\theta^1_{T_I}=\Inject(\theta_0, D, D_p, T_I, k)$
    
    \For{$i=(T_I+1)..T$}{
        $\theta^0_{i}=\textsc{Train}(\theta^0_{i-1}, D, 1)$ \;
        $\theta^1_{i}=\textsc{Train}(\theta^1_{i-1}, D, 1)$ \;
        $\textsc{Acc} = \mathcal{A}(\theta^0_{i}, \theta^1_{i}, X_I, Y_I)$  \Comment{Continuously monitor privacy attack success}
    }
    
}

\SetKwProg{Fn}{Function}{:}{}
\Fn{\Inject{$\theta, D, D_p, T_I, k$}}{
    $\theta_{T_I} = \textsc{Train}(\theta, D, T_I)$ \;
    \Return $\textsc{Train}(\theta_{T_I}, D_p, k)$ \Comment{Inject after training on clean data} \;
}
\caption{Monitoring Forgetting Throughout Training with $\textsc{Inject}$}
\label{alg:forgetting_inject}
\end{algorithm}

\section{Detailed Experiment Setup}
\label{app:expsetup}

\paragraph{ImageNet.} We train ResNet-50 models for 90 epochs, with a learning rate of 0.1, momentum of 0.9, and batch size of 256 (making each epoch roughly 5,000 steps). To test privacy, we focus mainly on the $\textsc{Inject}$ strategy, due to the large number of steps made in a single ImageNet epoch. We also briefly compare with $\textsc{Poison}$. In our experiments, we start from a fixed checkpoint (with no injected data), and fine-tune from this checkpoint for both the injected and baseline models, and perform membership inference attacks to distinguish the injected predictions from the baseline predictions. We calibrate the logit scores based on those produced by the fixed checkpoint, representing a powerful adversary who knows even an intermediate update before the inclusion of sensitive data. We also experiment with calibrated loss scores, and with not performing calibration, and find that these variants of the attack perform worse. We inject a single batch of data, and measure the precision at a false positive rate of 10\%, averaging over three trials. We note that, because both the injected and baseline models are initialized from the same checkpoint, the first batch of the injected model will be the injected data, at which point membership inference will reach 100\% accuracy/precision for any parameter setting.

\paragraph{LibriSpeech.} We use the state-of-the-art Conformer (L) architecture and the training method from \cite{GulatiConf20} to train models over LibriSpeech. 
We train each model with a batch size of 2,048 utterances for 100,000 steps using the Adam optimizer \cite{KingmaB14} and a transformer learning rate schedule \cite{Vaswani17}.
We use the $\textsc{Poison}$ strategy, as we make many passes over LibriSpeech.
To generate canary utterances, we start with a vocabulary consisting of the top 10,000 words in the LibriSpeech test set. 
For the canary transcripts, we sample each word randomly from the vocabulary, varying the length of the transcript (\{7, 10\} words), and the number of insertions in the dataset (\{1, 4, 10, 20\}).
For the canary audios, we use Text-To-Speech generation, and vary the gender of the speaker (\{Male, Female\}), and the number of speakers per transcript (\{1, 2\} of the same gender).
For each canary configuration, we generate 10 unique canaries.
As a result, we generate 320 unique canaries that amount to a total of 2,840 utterances ($<$1\% of total utterances in LibriSpeech).
We also use loss calibration, for which we train 11 reference models using a random 80\% partition of LibriSpeech for training each model.

\paragraph{Large Language Models.} We train decoder-only, 110M parameter, language models (with the T5 codebase and architecture from \cite{t52020}) for one epoch over a version of C4 (\cite{dodge2021documenting}).
In total, this was 83,216 steps with a batch size of 4,096.
We deduplicated C4 with the MinHash strategy introduced by \cite{broder1997resemblance} and the implementation and parameters from \cite{lee2021deduplicating}. \cite{lee2021deduplicating} showed that duplicating examples contributes to memorization, thus we seek to start our experiments with a deduplicated dataset. 
We used a threshold of 0.7 for both edit-distance and Jaccard similarity. 
We use the \textsc{Inject} strategy due to the large training set. 
For each batch of canaries, we created 256 unique canaries consisting of random tokens and duplicated them between 1 and 100 times to generate a full batch of 4,096 examples (including duplicates).
Batches of canaries were inserted 10\%, 50\%, and 90\% of the way into one epoch of training. 
Additionally, we another model, holding random seed, architecture, and data order fixed, without any canaries as a baseline.

\section{Extended Experiments}
\label{app:exp}

\begin{figure}
    \centering
    \includegraphics[width=.45\textwidth]{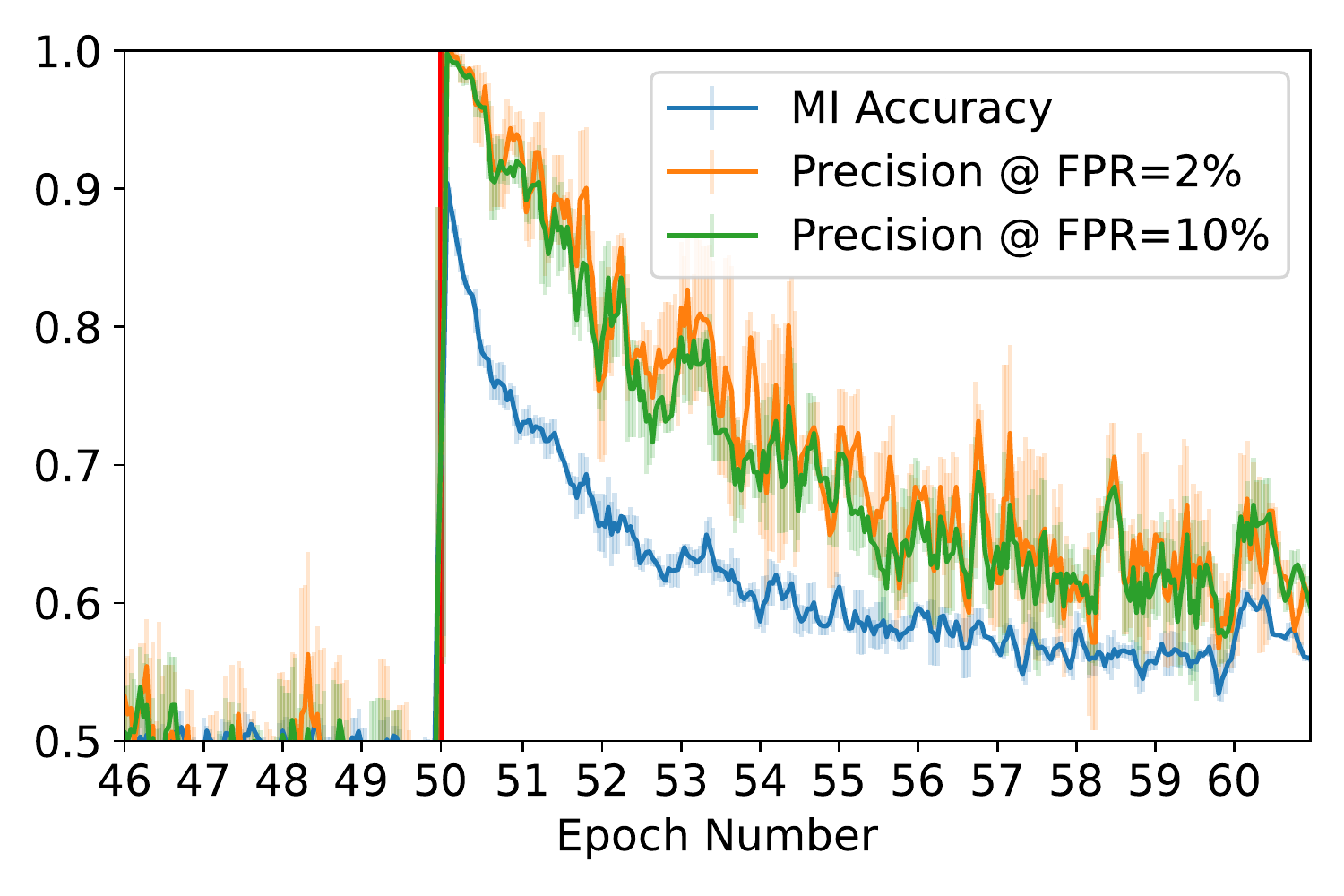}
    \caption{ImageNet figures have some smoothing applied. We present here an unsmoothed version of Figure \ref{fig:imnet_forgets}.}
    \label{fig:unsmoothed}
\end{figure}

\subsection{\textsc{Poison} on ImageNet}
\label{app:poison_inject}
We reproduce Figure~\ref{fig:imnet_forgets} in Figure~\ref{fig:imnet_forgets_inject}, which shows forgetting measured with \textsc{Inject} on ImageNet with 10 duplicates, and compare it side-by-side with \textsc{Poison} in Figure~\ref{fig:imnet_forgets_poison}, where each example is added once per ImageNet epoch, for 10 epochs. We see forgetting happens here at roughly the same rate, after controlling for how many times an example is seen. 
\begin{figure}
    \centering
     \begin{subfigure}[b]{0.45\textwidth}
         \centering
         \includegraphics[width=\textwidth]{chunksfull_50_10.pdf}
         \caption{ImageNet Forgetting ($\textsc{Inject}$)}
         \label{fig:imnet_forgets_inject}
     \end{subfigure}
     \hfill
     \begin{subfigure}[b]{0.45\textwidth}
         \centering
         \includegraphics[width=\textwidth]{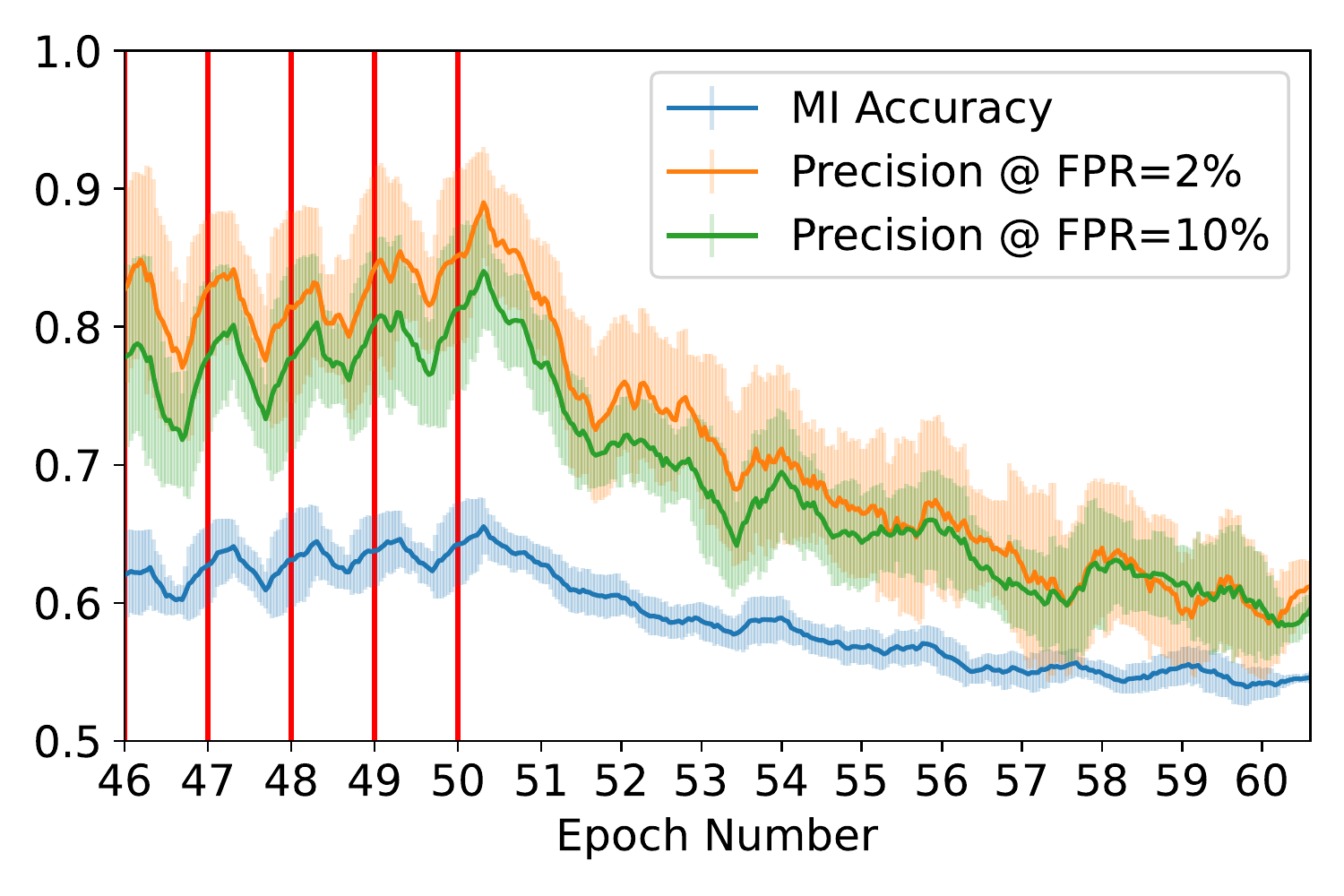}
         \caption{ImageNet Forgetting ($\textsc{Poison}$)}
         \label{fig:imnet_forgets_poison}
     \end{subfigure}
    \caption{On ImageNet, the \textsc{Inject} and \textsc{Poison} strategy result in similar forgetting curves, when the example is used 10 times (all at once in \textsc{Inject}, and once per epoch in \textsc{Poison}).}
    \label{fig:poison_inject}
\end{figure}

\subsection{Recency}
\label{app:time}
We plot the impact of recency on forgetting in Figure~\ref{fig:time}. To do so, we inject samples at various points in training, and measure how quickly models return to a small privacy leakage. We investigate this in Figures~\ref{fig:imnet_time}, \ref{fig:libri_time}, and \ref{fig:t5_time} for the ImageNet, LibriSpeech, and C4 datasets, respectively. On ImageNet, it appears that the later a point is seen, the longer it takes to forget, although this effect seems to level off for ImageNet (from epoch 40 to 80, there is little change). This is also true for LibriSpeech, where the later a point is seen, the more times it was used in training. However, there is no clear trend on the C4 dataset. Attacks which better exploit the optimization trajectory may find earlier examples to be forgotten more slowly, as they may have a longer impact on this trajectory. For now, privacy attacks appear to not be able to take advantage of the trajectory, and the optimization landscape appears to obfuscate rather than reveal memorized examples.

\begin{figure}
    \centering
      \begin{subfigure}[b]{0.32\textwidth}
         \centering
         \includegraphics[width=\textwidth]{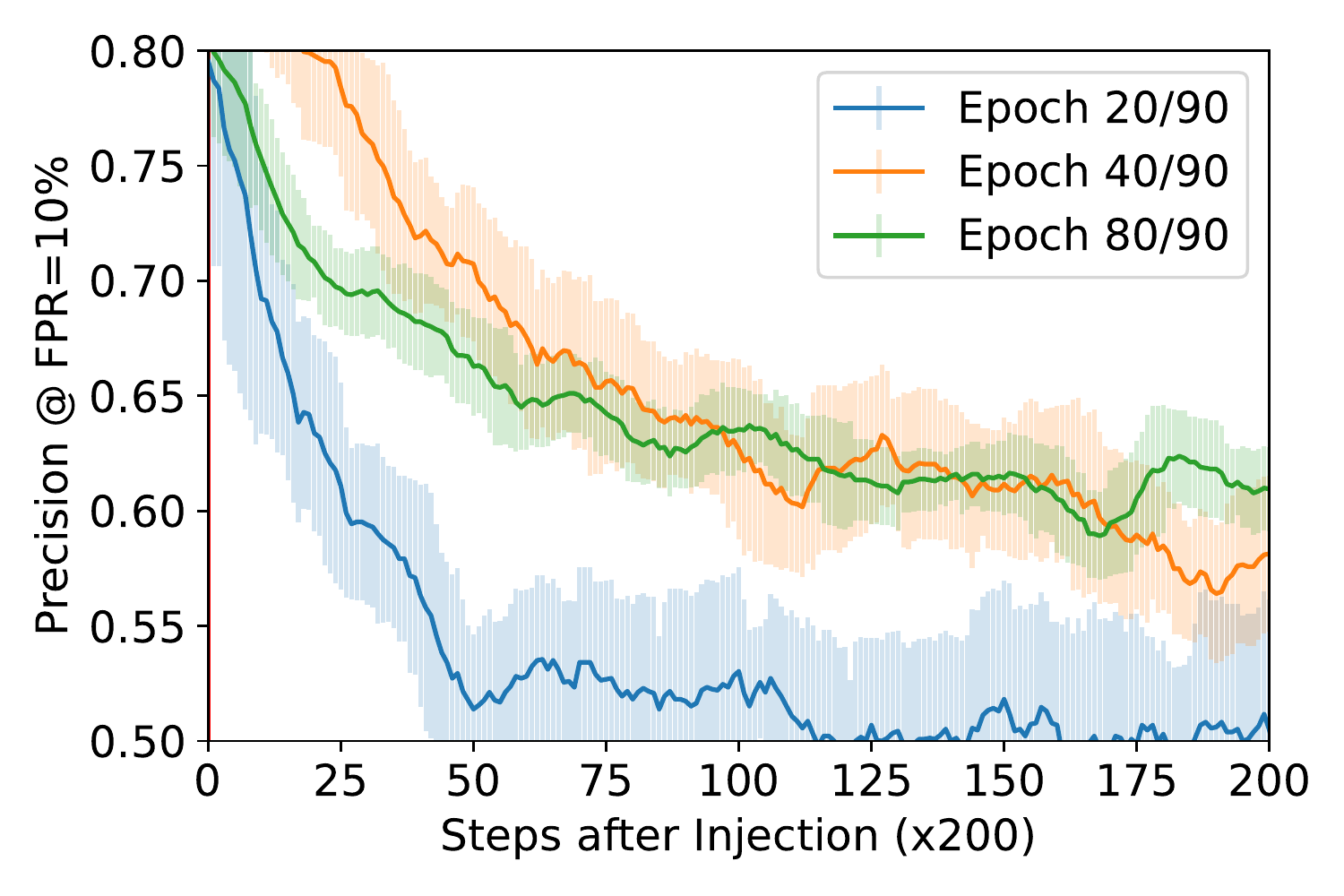}
         \caption{Injection Time (5 Repeats)}
         \label{fig:imnet_time}
     \end{subfigure}
     \begin{subfigure}[b]{0.32\textwidth}
         \centering
         \includegraphics[width=\textwidth]{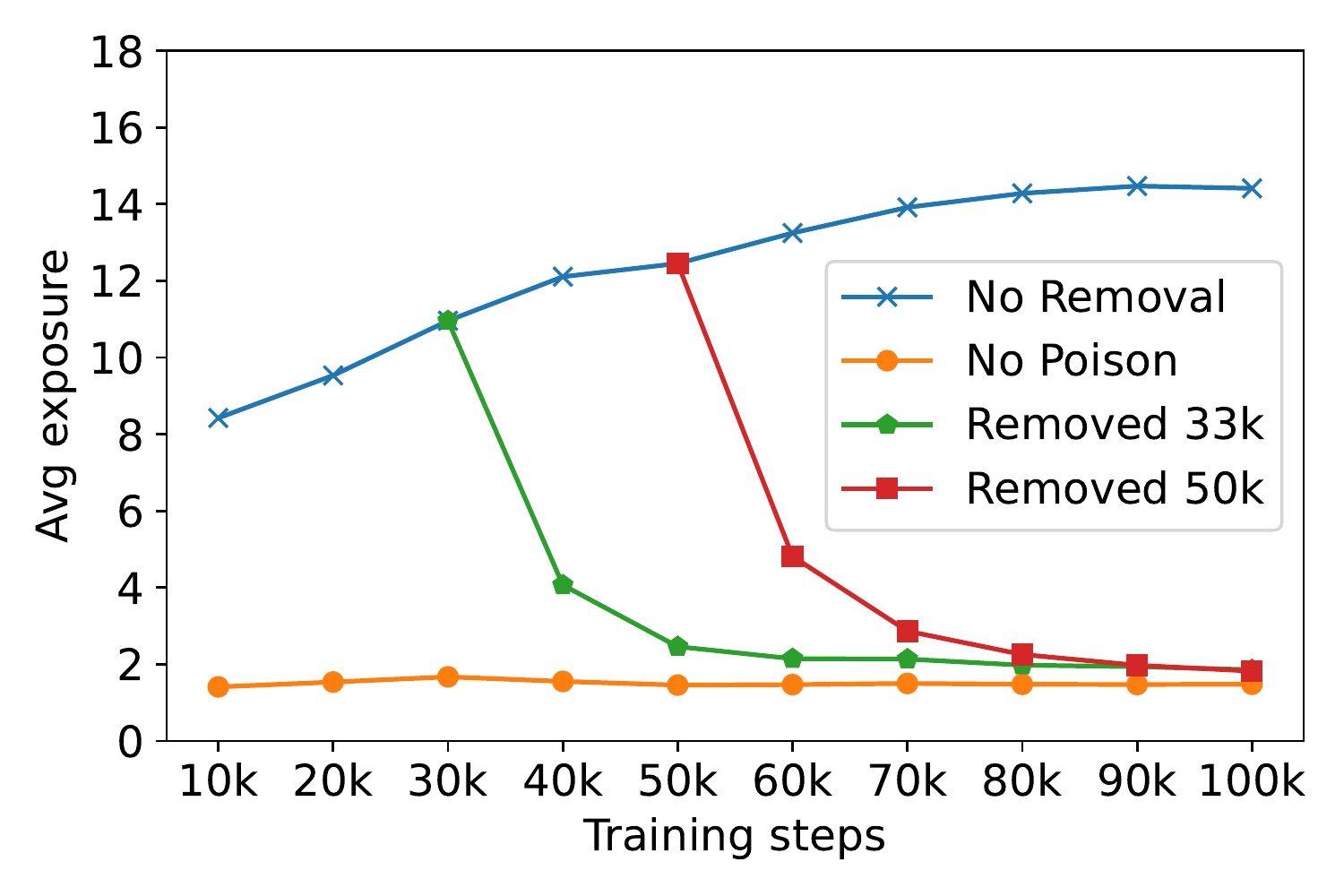}
         \caption{Removal Time (4 repeats)}
         \label{fig:libri_time}
     \end{subfigure}
     \begin{subfigure}[b]{0.32\textwidth}
         \centering
         \includegraphics[width=\textwidth]{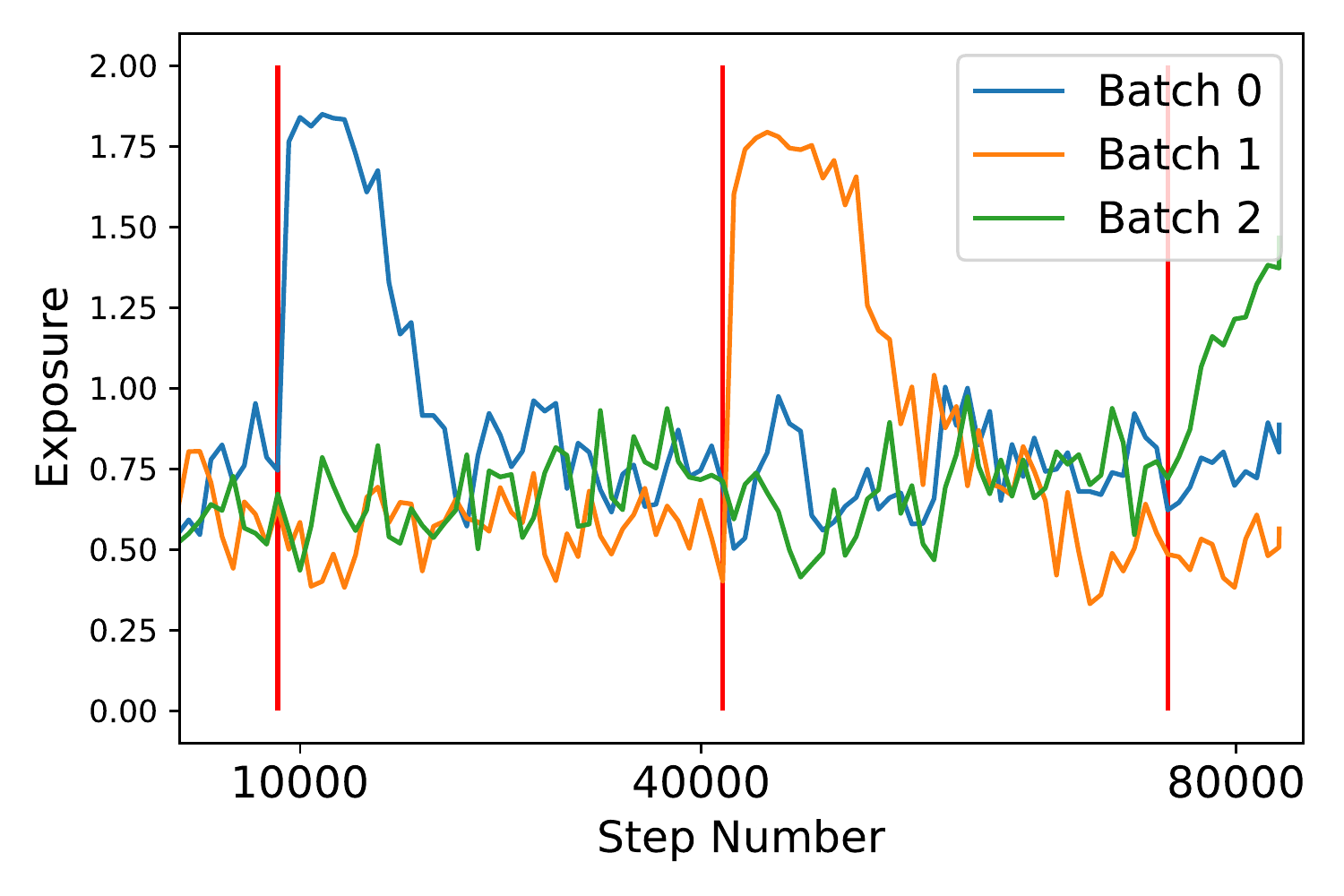}
         \caption{Injection Time (5 Repeats)}
         \label{fig:t5_time}
    \end{subfigure}

    \caption{There is no consistent effect for injection/removal time. On ImageNet, injection in Epoch 20 leads to earlier forgetting than later epochs. On LibriSpeech, later removal leads to slower forgetting, due to using the \textsc{Poison} strategy. For LMs, there is little difference in exposure between inserting 10\% through training and 50\%.}
    \label{fig:time}
\end{figure}

\begin{figure}
    \centering
     \begin{subfigure}[b]{0.32\textwidth}
         \centering
         \includegraphics[width=\textwidth]{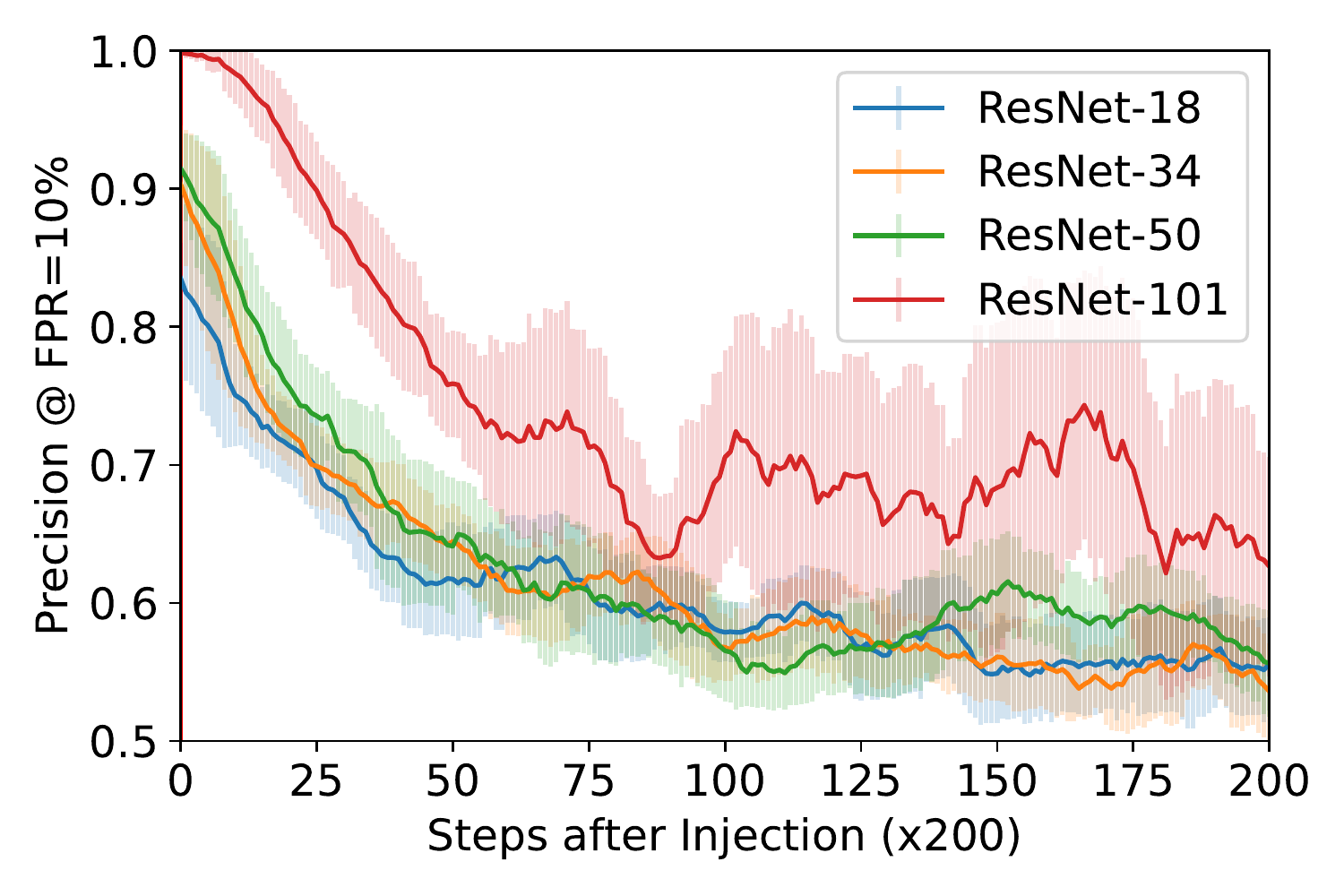}
         \caption{Model Size (Hard)}
         \label{fig:imnet_model}
    \end{subfigure}
    \begin{subfigure}[b]{0.32\textwidth}
        \centering
        \includegraphics[width=\textwidth]{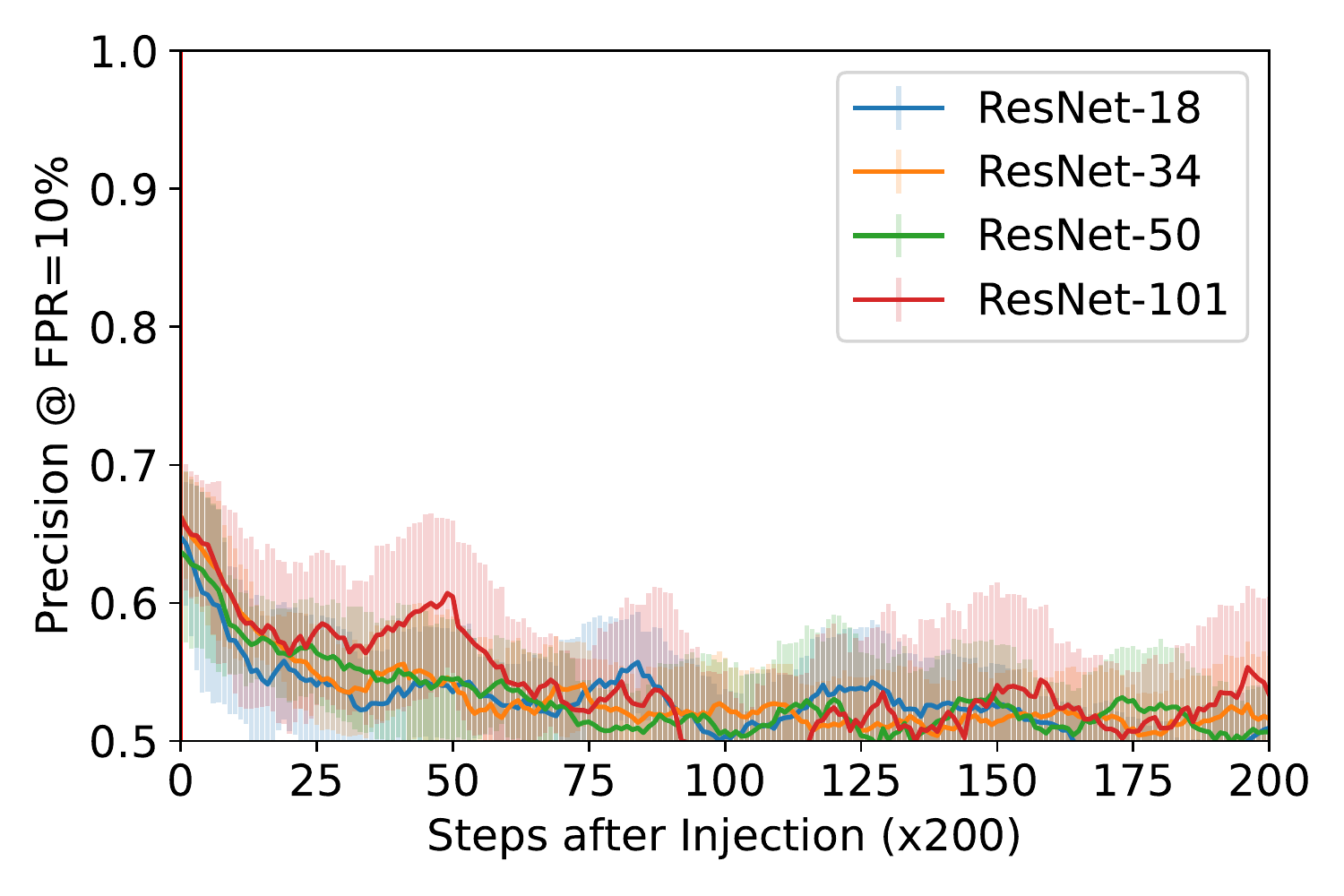}
         \caption{Model Size (Medium)}
         \label{fig:imnet_model_med}
     \end{subfigure}
     \begin{subfigure}[b]{0.32\textwidth}
         \centering
         \includegraphics[width=\textwidth]{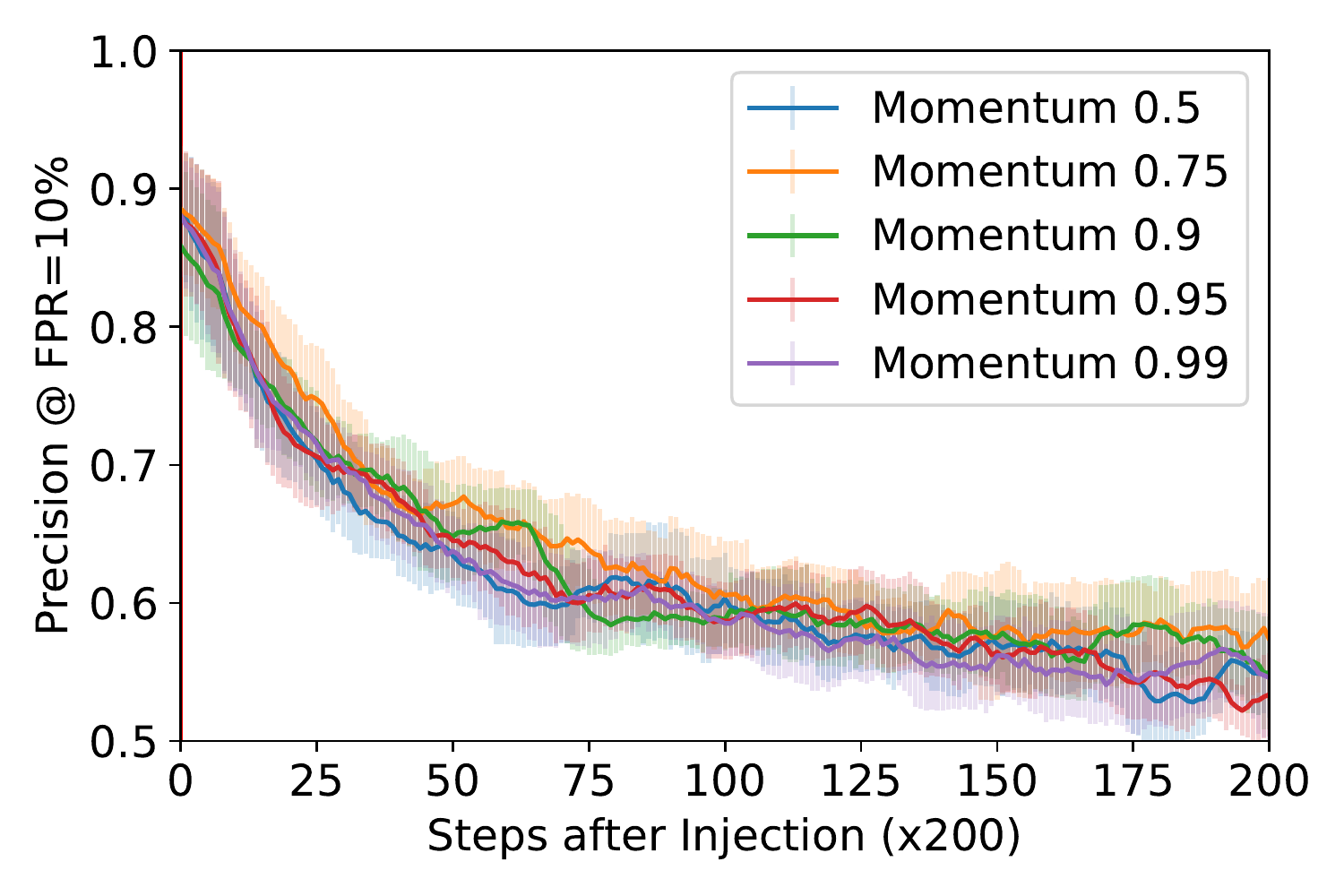}
         \caption{Momentum}
         \label{fig:imnet_mom}
     \end{subfigure}
    \caption{Hard examples seem to be forgotten more slowly by ResNet-101 models, but medium hardness examples are forgotten quickly by all models. Momentum does not appear to have a significant impact on forgetting. Results on ImageNet.}
    \label{fig:imnet_opt}
\end{figure}

\begin{figure}
    \centering
     \begin{subfigure}[b]{0.42\textwidth}
         \centering
         \includegraphics[width=\textwidth]{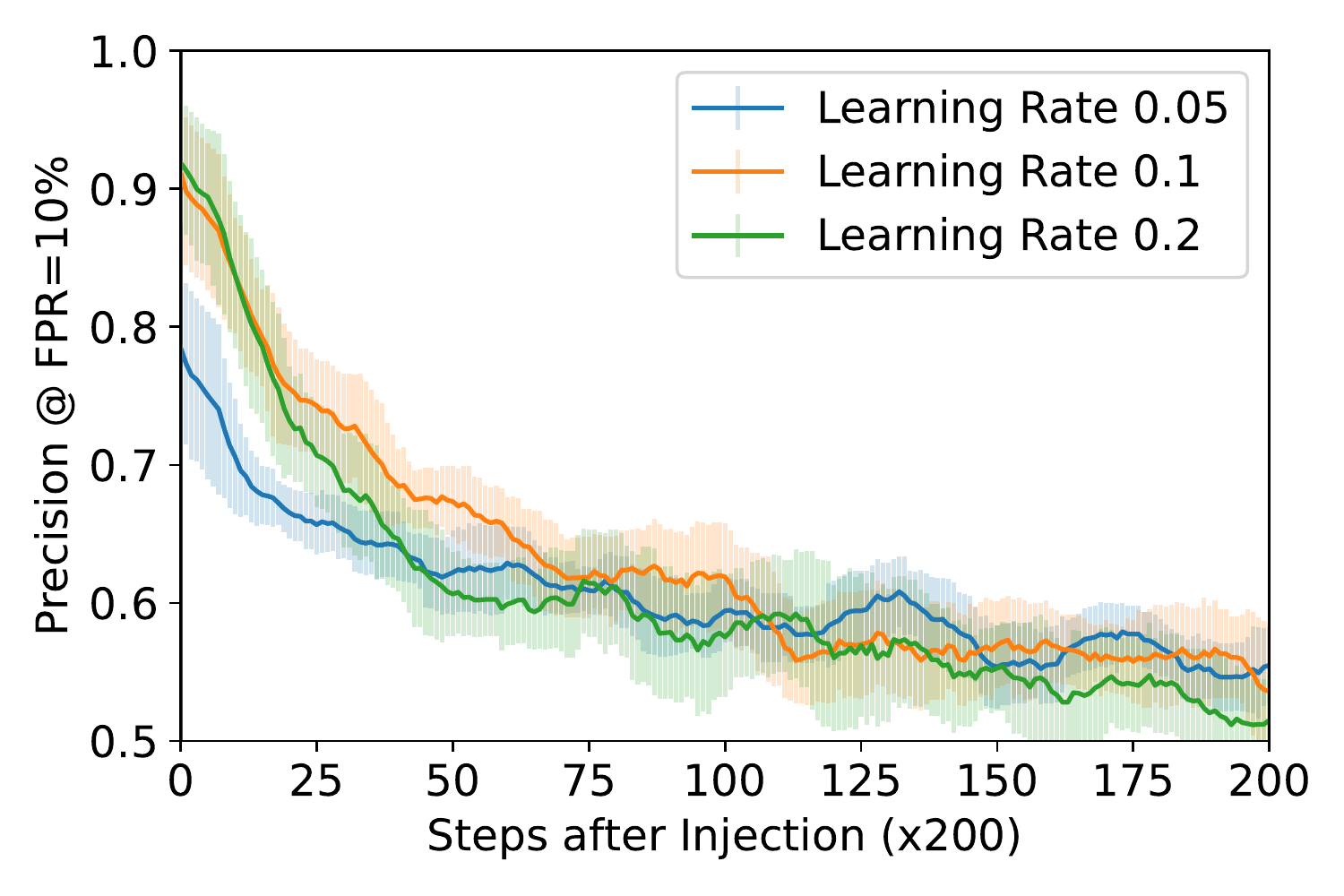}
         \caption{Learning Rate, No Change}
         \label{fig:imnet_lr_nochange}
     \end{subfigure}
     \begin{subfigure}[b]{0.42\textwidth}
         \centering
         \includegraphics[width=\textwidth]{imnet_weird_lr.pdf}
         \caption{Inject Near Epoch 60 Decay}
         \label{fig:imnet_weird58}
     \end{subfigure}
    \caption{Changing the global learning rate does not have a significant impact on forgetting (left), except immediately before or after large learning rate changes (right). In this model, the learning rate is decayed at Epoch 60. Forgetting happens less quickly at Epoch 59, right before the decay (Epoch 59 injection still has high precision after 14k steps); forgetting happens much more quickly for data injected in Epoch 60 immediately after the learning rate decay.}
    \label{fig:imnet_lr}
\end{figure}

\subsection{Model Size}
\label{app:modelsize}
We show the relationship between model size and forgetting in Figures~\ref{fig:imnet_model} and \ref{fig:imnet_model_med}, for hard test examples and medium hardness test examples, respectively. We evaluate forgetting for ResNet-18, ResNet-34, ResNet-50, and ResNet-101 models. There seems to be little difference between models for medium hardness test examples, but ResNet-101 models seem to forget hard test examples more slowly. Prior work has found that larger models generally memorize more \cite{carlini2022quantifying}, making this effect natural, but it is interesting that there is little difference between most of these models. It is possible that all ResNet models we consider are already highly over-parameterized, and forgetting has already nearly saturated.

\subsection{Optimizer Parameters}
\label{app:optparams}

On ImageNet, we vary the momentum and learning rate parameters to see if these result in a significant change to forgetting. For example, it is possible that an increased momentum parameter would lead to less forgetting, as it causes a point's update to contribute to every subsequent update. However, we find this to generally not be the case in our experiments on ImageNet: learning rate and momentum do not appear to have a significant impact on forgetting (Figures~\ref{fig:imnet_mom} and \ref{fig:imnet_lr_nochange}, respectively). Learning rate likely does not have a large impact, as all examples have the same learning rate, and so the data nondeterminism is equally powerful at causing forgetting in each case. While it is intuitive that larger momentum parameters would lead to longer forgetting, forgetting happens over long enough step counts for momentum to not impact it significantly. However there is one exception: in our ImageNet training, we use a learning rate decay which reduces learning rate by a factor of 10 every 30 epochs (we reproduce Figure~\ref{fig:lr_decay} in Figure~\ref{fig:imnet_weird58} to discuss this effect more here). This has two drastic implications on forgetting: examples seen just before the decay are forgotten much more slowly, and there is a period directly after the decay where forgetting happens more quickly. The first phenomenon (seen in Epoch 58 and Epoch 59 in Figure~\ref{fig:imnet_weird58}) is intuitive: the rate of forgetting depends on the other examples' contributions, and the point injected before the learning rate decay contributes 10x more than these later examples. The other phenomenon is more surprising, and it is likely due to the optimum being more ``unstable'' after a learning rate decay. This instability reduces over time, as we see in Figure~\ref{fig:imnet_weird58} (at Epochs 60-62); forgetting has begun to increase to a normal rate two epochs after the learning rate decay, at Epoch 62.

\subsection{Comparison with Traditional Forgetting Metrics}
We have argued that our approach is more relevant to privacy than traditional metrics for measuring forgetting, such as loss or accuracy (accuracy is standard in the catastrophic forgetting literature and was also used to measure forgetting in \cite{tirumala2022memorization}). Here, we present additional experimental justification. First, we see in Figure~\ref{fig:dup_acc_baseline} that accuracy is an overly optimistic metric from a privacy perspective. For examples repeated 10 times, accuracy has stabilized by $100\times 200=20000$ steps after injection, suggesting that they are completely forgotten. However, compared to Figure~\ref{fig:imnet_repeat}, these examples are still at 70\% membership inference precision, not invulnerable from privacy attacks. Loss (in Figure~\ref{fig:dup_loss_baseline}) takes longer to stabilize, providing more comparable forgetting speed to our approach. However, membership inference scores are still more easily interpreted than loss scores for quantifying privacy risk. We also note that, to make comparison easier, we have used \emph{hard} examples to report results with repetitions, but existing work typically considers more \emph{average} examples, as forgetting is traditionally used to measure average performance on subdistributions. For this reason, existing metrics are more similar to our experiments with \emph{medium} hardness or \emph{easy} examples in Figure~\ref{fig:hard_loss_baseline}, where forgetting happens nearly immediately, while hard examples take significantly longer. Worst-case analysis is more relevant from a privacy perspective, justifying our approach.

\begin{figure}
    \centering
     \begin{subfigure}[b]{0.32\textwidth}
         \centering
         \includegraphics[width=\textwidth]{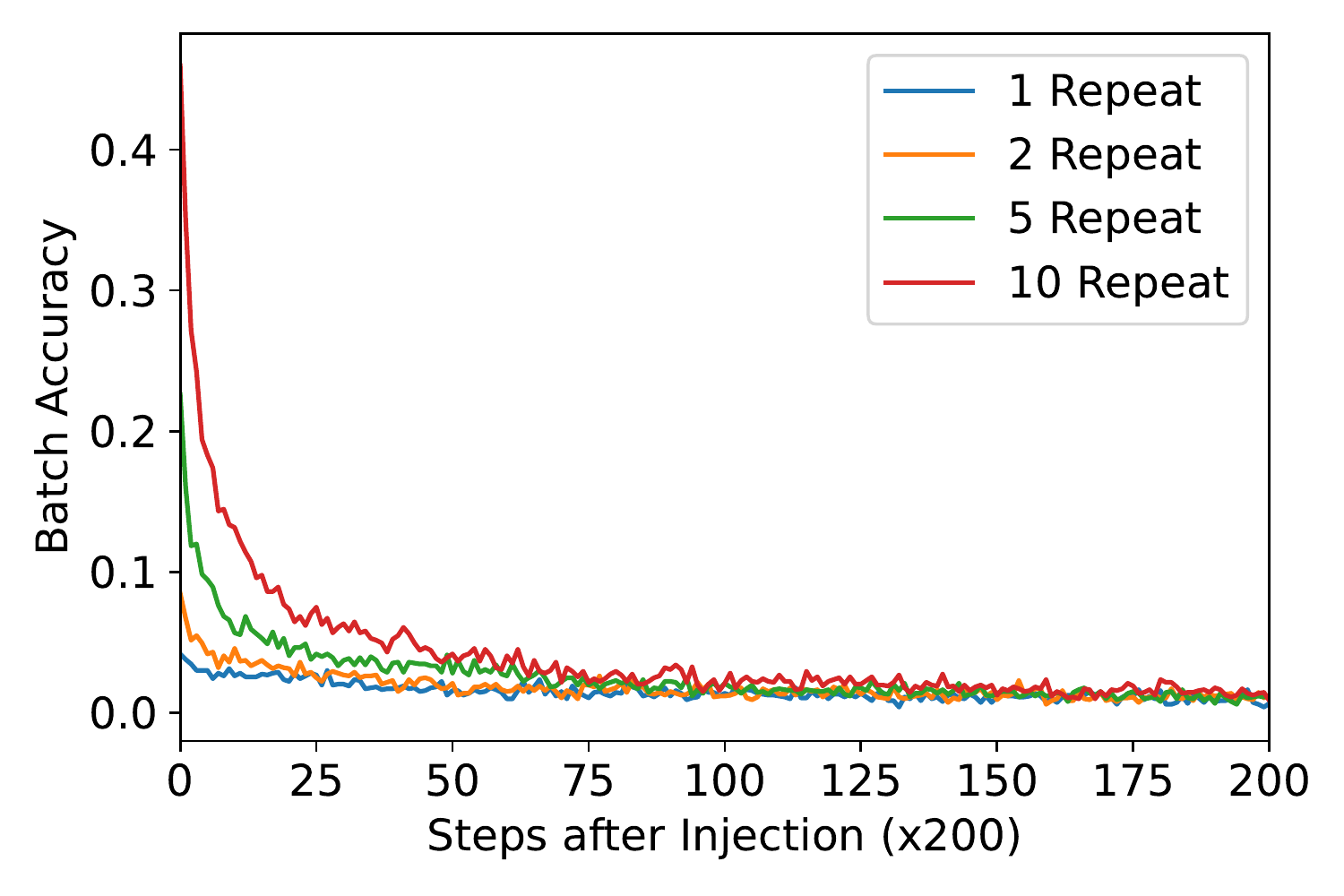}
         \caption{Repeats with Accuracy Baseline}
         \label{fig:dup_acc_baseline}
     \end{subfigure}
     \hfill
     \begin{subfigure}[b]{0.32\textwidth}
         \centering
         \includegraphics[width=\textwidth]{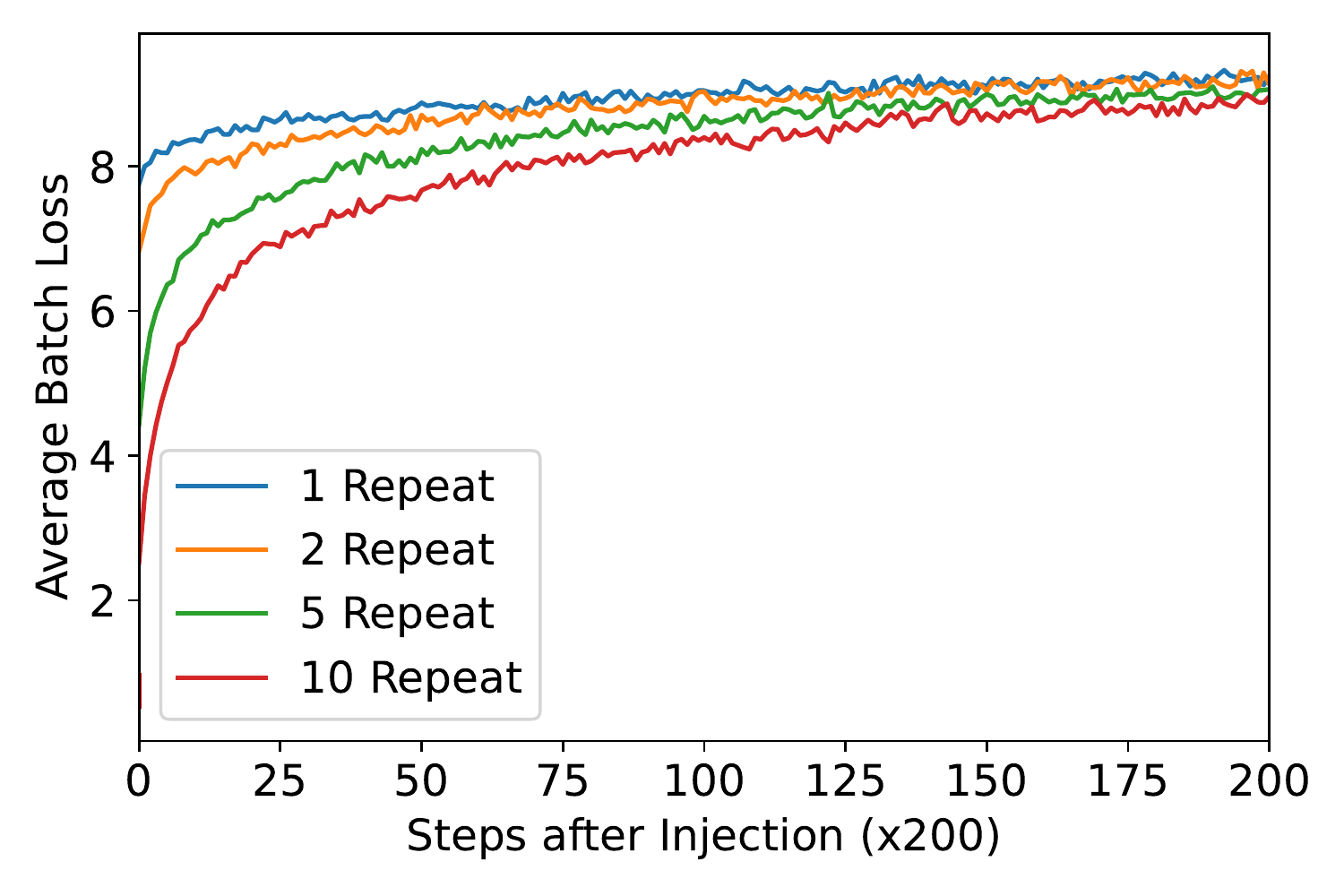}
         \caption{Repeats with Loss Baseline}
         \label{fig:dup_loss_baseline}
     \end{subfigure}
     \hfill
     \begin{subfigure}[b]{0.32\textwidth}
         \centering
         \includegraphics[width=\textwidth]{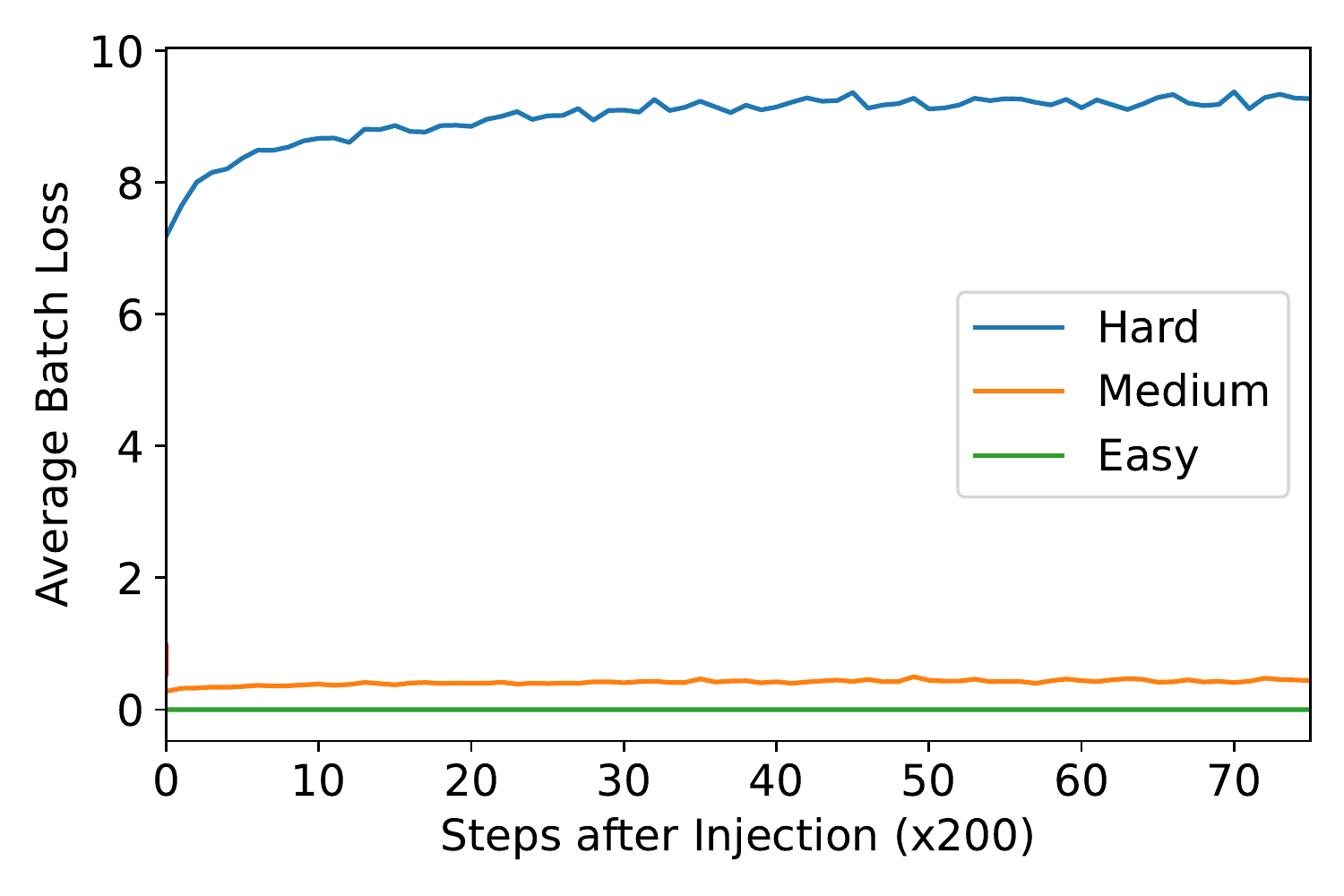}
         \caption{Hardness with Loss Baseline}
         \label{fig:hard_loss_baseline}
     \end{subfigure}
     \caption{Experiments on ImageNet with traditional forgetting metrics - accuracy and loss on the injected batch.}
\end{figure}

\section{Extension of Understanding Forgetting, Section~\ref{sec:theory}}

\begin{figure}
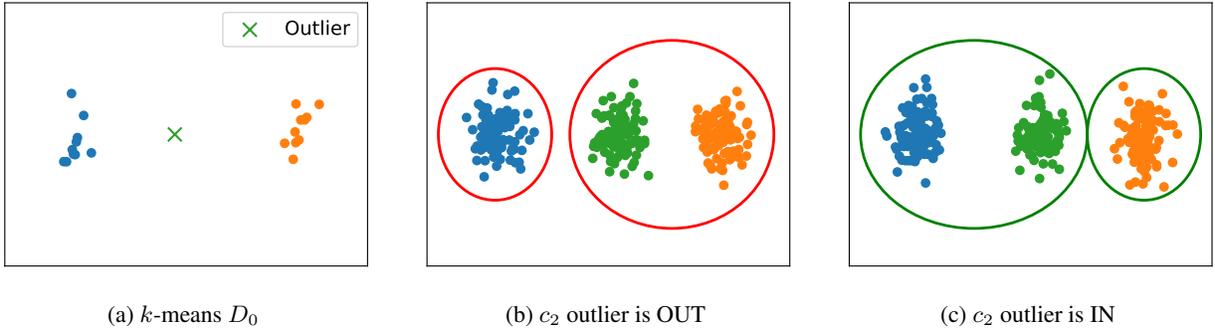

    \centering
     \begin{subfigure}[b]{0.32\textwidth}
         \centering
         \includegraphics[width=\textwidth]{kmeans_d0_forgetting.pdf}
         \caption{$k$-means $D_0$}
         \label{fig:kmeans_d0_app}
     \end{subfigure}
     \hfill
     \begin{subfigure}[b]{0.32\textwidth}
         \centering
         \includegraphics[width=\textwidth]{kmeans_d1_OUT.pdf}
         \caption{$c_2$ outlier is OUT}
         \label{fig:kmeans_OUT_app}
     \end{subfigure}
     \hfill
     \begin{subfigure}[b]{0.32\textwidth}
         \centering
         \includegraphics[width=\textwidth]{kmeans_d1_IN.pdf}
         \caption{$c_2$ outlier is IN}
         \label{fig:kmeans_IN_app}
     \end{subfigure}
    \caption{The non-convexity of the $k$-means objective can prevent forgetting. When an outlier (green \textcolor{teal}{$\boldsymbol{\times}$} in Subfigure~\ref{fig:kmeans_d0_app}) is present in the initial training set, the clustering in Subfigure~\ref{fig:kmeans_IN_app} is learned; when the outlier is not present, the clustering in Subfigure~\ref{fig:kmeans_OUT_app} is learned. Adding more data does not result in the outlier's influence being forgotten. Left to right, clusters are $c_1$, $c_2$, and $c_3$ in the text. We add a second dimension to the setting we discuss in the text for presentation.}
    \label{fig:kmeans_app}
\end{figure}

\subsection{Concrete k-means Description}
\label{app:kmeans}
We reproduce here Figure~\ref{fig:kmeans_app}, which offers intuition for our setting. Concretely, we consider a one-dimensional $k$-means task with $k=2$, but where three clusters exist in the data, denoted by $c_1, c_2, c_3$. Samples from cluster $c_1$ are drawn from $\mathcal{N}(-1, \sigma^2)$, samples from cluster $c_2$ are drawn from $\mathcal{N}(\mu, \sigma^2)$, and samples from cluster $c_3$ are drawn from $\mathcal{N}(1, \sigma^2)$. Here, $\sigma^2$ is some fixed variance parameter (we use $\sigma=0.03$), and $\mu>0$ controls the distance between $c_2$ and the other two clusters, and crucially makes $c_2$ slightly closer to $c_3$ than to $c_1$ (we use $\mu=0.03$). We consider a two-stage learning procedure, where some dataset $D_0$ is used to learn the clusters initially, and the model is then updated with some new dataset $D_1$. We fix $D_1$ to contain an equal number $n$ of samples from each cluster. However, we consider two cases for $D_0$: (1) in the IN case, $D_0$ consists of $m$ samples from $c_1$, one ``outlier'' sample $x$ (we use $x=-0.01$, slightly more than one standard deviation from the mean) from $c_2$, and $m$ samples from $c_3$, and in the OUT case, $D_0$ consists of $m$ samples from $c_1$ and $m$ samples from $c_3$, without the ``outlier''. Samples from $c_2$ are never included in $D_0$, perhaps because they are from an emergent subpopulation. The single ``outlier'' sample can result in drastically different clusters, a difference which does not get forgotten by collecting more examples $D_1$. We run an experiment with this setup, using $m=10$, $n=100$, and run 200 trials with fresh samples from each cluster. Over these trials, membership inference accuracy for $x$ has 97\% accuracy, but the attack also has perfect precision. That is, when $x$ is OUT, $c_1$ and $c_2$ are far enough that they are never joined, and there are only examples where the randomness of the data causes $c_2$ and $c_3$ to join despite the presence of $x$. We note that differential privacy would prevent this attack, as it would randomize the clustering, preventing membership inference. Machine unlearning algorithms designed for $k$-means \cite{ginart2019making} would also retrain the entire model upon removing the outlier.

\subsection{Nondeterminism is Required}
\label{app:det}
The analysis in the remainder of this section will use the mean estimation task, a simple task which is frequently used in the theoretical privacy literature to understand the effectiveness of privacy attacks \cite{homer2008resolving, sankararaman2009genomic, dwork2015robust} or design differentially private algorithms \cite{bun2019average, kamath2019privately}. In mean estimation, a learner wants to find the mean of a dataset, which is the minimizer of the squared $\ell_2$ loss. To understand forgetting, we consider algorithms which perform mean estimation by per-example gradient descent, iterating through samples $x$ from the dataset and updating the model at iteration $i$, $\theta_i$, by a gradient step on the squared $\ell_2$ loss $(\theta_i - x)^2$, with learning rate $\eta$.

To show that determinism prevents forgetting, we consider the deterministic version of per-example gradient descent in Algorithm~\ref{alg:deterministic}, which iterates through a dataset in a fixed order. In Theorem~\ref{thm:det}, we prove that this algorithm doesn't forget, by considering two datasets which differ in only one row, and showing that the two datasets have perpetually different models, so that an adversary can distinguish between them if the adversary knows all other points. In the main body, we also experimentally verify this result, when logistic regression is trained by minimizing the standard cross-entropy loss with fixed-order SGD on the FMNIST dataset.

\begin{algorithm}
\SetAlgoLined
\DontPrintSemicolon
\KwData{Dataset $D=\lbrace x_i \rbrace_{i=1}^n$, initial parameters $\theta_0$, learning rate $\eta$}
\SetKwFunction{TrainMeanDeterministic}{\textsc{TrainMeanDeterministic}}

\SetKwProg{Fn}{Function}{:}{}
\Fn{\TrainMeanDeterministic{$D, \theta_0, \eta$}}{
    \For{$i=1..n$}{$\theta_i = \theta_{i-1} - 2\eta(\theta_{i-1} - x_i)$  \;
    }
    \Return $\theta_n$ \;
}
\caption{Mean Estimation with Deterministic Ordering}
\label{alg:deterministic}
\end{algorithm}

\begin{theorem}
Consider the described algorithm $T$ in Algorithm~\ref{alg:deterministic}, with $0<\eta<0.5$. When run on any two distinct datasets $D_0$ and $D_1$ differing in only one row, we have $T(D_0)\neq T(D_1)$, regardless of which index $D_0$ and $D_1$ differ in.
\label{thm:det}
\end{theorem}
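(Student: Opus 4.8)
The plan is to unroll the linear recurrence that defines Algorithm~\ref{alg:deterministic} into a closed form, and then observe that altering a single row of the dataset perturbs the output by a nonzero amount. First I would rewrite the per-example update: since the gradient of $(\theta - x_i)^2$ at $\theta_{i-1}$ is $2(\theta_{i-1}-x_i)$, the step is
\[
\theta_i = \theta_{i-1} - 2\eta(\theta_{i-1}-x_i) = (1-2\eta)\theta_{i-1} + 2\eta x_i,
\]
an affine map that is linear in the data. A one-line induction on $n$ then gives the closed form
\[
\theta_n = (1-2\eta)^n \theta_0 + 2\eta \sum_{i=1}^{n} (1-2\eta)^{n-i} x_i .
\]

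Next, suppose $D_0 = \{x_i\}_{i=1}^n$ and $D_1 = \{x_i'\}_{i=1}^n$ differ only at index $j$, so $x_i = x_i'$ for all $i \neq j$ and $x_j \neq x_j'$. Both runs use the same initialization $\theta_0$, so plugging both datasets into the closed form and subtracting cancels the $\theta_0$ term and every summand except the $j$-th, leaving
\[
T(D_0) - T(D_1) = 2\eta\,(1-2\eta)^{n-j}\,(x_j - x_j').
\]
Finally I would note this quantity is nonzero: the hypothesis $0<\eta<0.5$ gives $2\eta \neq 0$ and $0 < 1-2\eta < 1$, hence $(1-2\eta)^{n-j}\neq 0$ for every $j\in\{1,\dots,n\}$, and $x_j \neq x_j'$ by assumption. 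Therefore $T(D_0)\neq T(D_1)$ regardless of which index the datasets differ in, which also means an adversary who knows all rows of $D_0$ except row $j$ can simulate both runs and distinguish their outputs — i.e., no forgetting.

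There is no real obstacle here beyond bookkeeping; the only point requiring care is that the contraction factor $1-2\eta$ must be kept away from $0$, which is exactly what the assumption $\eta < 1/2$ provides, so that the influence of the changed row is never annihilated. This is also the conceptually illuminating step: it shows the deterministic update map is injective in the perturbed coordinate, which is precisely why injecting nondeterminism (random data ordering, as in Theorem~\ref{thm:meanforgets}, or added noise) is needed to make forgetting possible.
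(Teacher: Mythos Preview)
Your proof is correct and follows essentially the same idea as the paper's: both exploit that the update $\theta \mapsto (1-2\eta)\theta + 2\eta x$ is affine with nonzero coefficient $1-2\eta$ under the hypothesis $0<\eta<1/2$. The only cosmetic difference is that you unroll the recurrence to a closed form and read off the exact difference $2\eta(1-2\eta)^{n-j}(x_j-x_j')$, whereas the paper argues inductively that each step is injective; your version has the minor bonus of displaying explicitly how the influence of row $j$ decays geometrically but never vanishes.
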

\begin{proof}
Consider the intermediate points $\theta^{0}_i$ and $\theta^{1}_i$, which represent the model at the $i$th step of training on datasets $D_0$ and $D_1$, respectively. Write $j$ for the index where $D_0$ and $D_1$ differ. To prove the theorem, we first note that $\theta^0_{j-1}=\theta^1_{j-1}$, as the first $j-1$ steps of training all use identical examples from $D_0, D_1$. Next, we show that $\theta^0_{j}\neq\theta^1_{j}$, and then that $\theta^0_{i}\neq\theta^1_{i}$ for all $i>j$.

To see that $\theta^0_{j}\neq\theta^1_{j}$, we write out the precise gradient updates:
\begin{align*}
\theta^{0}_j-\theta^1_j & =\theta^{0}_{j-1}-2\eta(\theta^{0}_{j-1}-x^0_j)-(\theta^{1}_{j-1}-2\eta(\theta^{1}_{j-1}-x^1_j)) = 2\eta(x^0_j - x^1_j)\neq 0,
\end{align*}
where the second step holds because $\theta^0_{j-1}=\theta^1_{j-1}$, and the third holds because $j$ is the index where $D_0$ and $D_1$ differ.

Now, to show that this implies that $\theta^0_{i}\neq\theta^1_{i}$ for all $i>j$, we show that, for all $x$, the gradient update function is one-to-one. That is, for all $x$, $\theta\neq\theta'$, taking a gradient step on $x$ still results in different models:
\begin{align*}
    \theta-2\eta(\theta-x)=\theta(1-2\eta)+2\eta x\neq \theta'(1-2\eta)+2\eta x = \theta'-2\eta(\theta'-x).
\end{align*}
From this, $\theta^0_{j}\neq\theta^1_{j}$ implies that the models are different at step $j+1$, which itself implies that the models are different at step $j+2$, and will be different for the entirety of training.
\end{proof}

\subsection{Data Randomness can Cause Forgetting}
Here, we prove Theorem~\ref{thm:meanforgets}, restated below.
\begin{algorithm}
\SetAlgoLined
\DontPrintSemicolon
\KwData{Injection $v$, initial parameters $\theta_0$, training steps $k$, distribution $\mathcal{D} = \mathcal{N}(\mu, \Sigma)$, learning rate $\eta$}
\SetKwFunction{TrainMeanSampled}{\textsc{TrainMeanSampled}}

\SetKwProg{Fn}{Function}{:}{}
\Fn{\TrainMeanSampled{$\theta_0, v, k, \eta, \mathcal{D}$}}{
    $\theta_0^+ = \theta_0 - 2\eta(\theta_0 - v)$ \Comment{Update with $+v$} \;
    $\theta_0^- = \theta_0 - 2\eta(\theta_0 + v)$ \Comment{Update with $-v$} \;
    \For{$i=1,\dots,k$}{
        $x_i^+, x_i^- \leftarrow \textsc{Sample}(\mathcal{D})$ \;
        $\theta_i^+ = \theta_{i-1}^+ - 2\eta(\theta_{i-1}^+ - x_i^+)$  \;
        $\theta_i^- = \theta_{i-1}^- - 2\eta(\theta_{i-1}^- - x_i^-)$ \;

    }
    \Return $\theta_k^+, \theta_k^-$ \;
}
\caption{Mean Estimation with Random Sampling and Injection}
\label{alg:meanforgetting}
\end{algorithm}

\begin{reptheorem}{thm:meanforgets}
The distributions of the models $\theta_k^+$ and $\theta_k^-$ output by Algorithm~\ref{alg:meanforgetting} with learning rate $0<\eta<1/2$ have R\'enyi divergence of order $\alpha$ at most $\tfrac{2\alpha}{k}v^\top\Sigma^{-1}v$.
\end{reptheorem}

\begin{proof}
First, observe that the gradient update on some $\theta_0$ with example $x_0$ produces $\theta_1=\theta_0-2\eta(\theta_0-x_0)=\theta_0(1-2\eta) + 2\eta x_0$. Another gradient update on $x_1$ produces $\theta_2=\theta_0(1-2\eta)^2 + 2\eta(1-2\eta)x_0 + 2\eta x_1$. In general, we have \begin{equation}
    \theta_k=\theta_0(1-2\eta)^k+2\eta x_{k-1} + 2\eta(1-2\eta) x_{k-2} + \cdots + 2\eta(1-2\eta)^{k-1}x_0.
    \label{eq:thetak}
\end{equation}
We write $\theta_0^+=\theta_0(1-2\eta) + 2\eta v$ (a gradient step on $+v$) and $\theta_0^-=\theta_0(1-2\eta) - 2\eta v$ (a gradient step on $-v$). Then we analyze the distributions $\theta_k^+$ and $\theta_k^-$. Using Equation~\ref{eq:thetak}, replacing $\theta_0$ with $\theta_0^+$ and $\theta_0^-$, and using that each $x_i$ is sampled from $\mathcal{N}(\mu, \Sigma)$, we obtain the distributions
\begin{align*}
&\theta_k^+ = \mu + (\theta_0^+-\mu)(1-2\eta)^k + \mathcal{N}\left(0, \tfrac{\eta(1-(1-2\eta)^{2k})}{1-\eta}\Sigma\right), \\
&\theta_k^- = \mu + (\theta_0^--\mu)(1-2\eta)^k + \mathcal{N}\left(0, \tfrac{\eta(1-(1-2\eta)^{2k})}{1-\eta}\Sigma\right).
\end{align*}
Using that the R\'enyi divergence of order $\alpha$ between two multivariate Gaussians with means $\mu_0, \mu_1$ and equal covariance $\Sigma'$ is $\tfrac{\alpha}{2}(\mu_0-\mu_1)^T\Sigma'^{-1}(\mu_0-\mu_1)$, we can compute the divergence for $\theta_k^+$ and $\theta_k^-$ as $8\alpha\tfrac{\eta(1-\eta)(1-2\eta)^{2k}}{1-(1-2\eta)^{2k}}v^T\Sigma^{-1} v$.

It remains to bound the $\eta$ term in this product. The function $f(\eta) = \tfrac{\eta(1-\eta)(1-2\eta)^{2k}}{1-(1-2\eta)^{2k}}$ has negative derivative on the interval $0<\eta<1/2$ for any $k>0$. Then $f(\eta)$ it is maximized on $0<\eta<1/2$ when $\eta\rightarrow 0$; we can compute this limit with L'Hospital's rule as $\tfrac{1}{4k}$, completing the proof.
\end{proof}

\end{document}